\renewcommand{\Re}{\mathbb{R}}
\newcommand{\Id}{\mathtt{I}}
\newcommand{\diag}[1]{\ensuremath{\mathrm{diag}\left(#1\right)}}
\newcommand{\rank}[1]{\ensuremath{\mathrm{rank}\left(#1\right)}}
\newcommand{\norm}[1]{\ensuremath{\left\| #1 \right\|}}
\newtheorem{prob}{Problem}[section]
\newtheorem{ques}{Question}
\newtheorem{theo}{Theorem}[section]
\newtheorem{defn}{Definition}[section]
\newtheorem{lemm}{Lemma}[section]
\newtheorem{coro}{Corollary}[section]
\begin{document}
\title{Recovery of Future Data via Convolution Nuclear Norm Minimization}

\author{Guangcan Liu,~\IEEEmembership{Senior Member,~IEEE}, Wayne Zhang
\IEEEcompsocitemizethanks{\IEEEcompsocthanksitem G. Liu is with the School of Automation, Southeast University, Nanjing, China 210018. Email: gcliu1982@gmail.com.\protect\\
\IEEEcompsocthanksitem W. Zhang is with SenseTime Research, Harbour View 1 Podium, 12 Science Park East Ave, Hong Kong Science Park,
       Shatin, N.T., Hong Kong 999077. Email: wayne.zhang@sensetime.com.}
\thanks{Copyright (c) 2017 IEEE. Personal use of this material is permitted. However, permission to use this material for any other purposes must be obtained from the IEEE by sending a request to pubs-permissions@ieee.org.}}

\IEEEtitleabstractindextext{
\begin{abstract}
This paper studies the problem of \emph{time series forecasting} (TSF) from the perspective of \emph{compressed sensing}. First of all, we convert TSF into a more inclusive problem called \emph{tensor completion with arbitrary sampling} (TCAS), which is to restore a tensor from a subset of its entries sampled in an \emph{arbitrary} manner. While it is known that, in the framework of Tucker low-rankness, it is theoretically impossible to identify the target tensor based on some arbitrarily selected entries, in this work we shall show that TCAS is indeed tackleable in the light of a new concept called \emph{convolutional low-rankness}, which is a generalization of the well-known \emph{Fourier sparsity}. Then we introduce a convex program termed Convolution Nuclear Norm Minimization (CNNM), and we prove that CNNM succeeds in solving TCAS as long as a sampling condition---which depends on the \emph{convolution rank} of the target tensor---is obeyed. This theory provides a meaningful answer to the fundamental question of what is the minimum sampling size needed for making a given number of forecasts. Experiments on univariate time series, images and videos show encouraging results.
\end{abstract}

\begin{IEEEkeywords}
compressed sensing, sparsity and low-rankness, time series forecasting, Fourier transform, convolution.
\end{IEEEkeywords}}

\maketitle

\IEEEdisplaynontitleabstractindextext
\IEEEpeerreviewmaketitle

\section{Introduction}\label{sec:introduction}
\IEEEPARstart{C}{an} we predict the future? While seems mysterious to the general public, this question is of particular interest to a wide range of scientific areas, ranging from mathematics, physics and philosophy to finance, meteorology and engineering. \emph{Time series forecasting} (TSF)~\cite{suvey:2006}, which aims to predict future observations coming ahead of time based on historical data, plays an important role in the development of forecasting techniques. Formally, the problem can be described as follows:
\begin{prob}[Time Series Forecasting]\label{pb:mtsf}
Suppose that $\{\mathbf{M}_t\}_{t=1}^{p+h}$ is a sequence of $p+h$ order-$(n-1)$ tensors with $n\geq1$, i.e., $\mathbf{M}_t\in\Re^{m_1\times{}\cdots\times{}m_{n-1}}$. Given the historical part $\{\mathbf{M}_t\}_{t=1}^{p}$ consisting of $p$ observed samples, the goal is to predict the next $h$ unseen samples $\{\mathbf{M}_t\}_{t=p+1}^{p+h}$, where $h$ is often called the `forecast horizon'.
\end{prob}

Usually, the forecast horizon $h$ is provided by users and is determined according to the demands of planning or decision making. For example, in M4 Competition~\cite{M4:ijf:2020}, $h$ is a positive integer between 6 and 48. In general, the above problem has a wide scope that covers various settings of forecasting. When $n=1$ or $n=2$, Problem~\ref{pb:mtsf} corresponds to the classic \emph{univariate} or \emph{multivariate} TSF problems~\cite{suvey:2006}, respectively. In the case of $n=3$, $\mathbf{M}_t$ is of matrix-valued, thus Problem~\ref{pb:mtsf} embodies the \emph{video prediction} task investigated in~\cite{Cun2016}. Interestingly, as one will see, the proposed methods can handle the general case of $n\geq1$ in a universal way. Yet, it is worth mentioning that the most essential case is in fact $n=1$, i.e., univariate TSF, as acknowledged by many forecasting competitions~\cite{HYNDMAN20207,M4:ijf:2020}.

Despite its brief definition, the TSF problem is extremely difficult to crack and still demands new scenarios after decades of research~\cite{Pablo:arxiv:2020}. In particular, to our knowledge, the following fundamental question has not been previously answered:
\begin{ques}[Sampling Complexity]\label{ques:sc}
What is the minimum sampling size $p$ needed for predicting $h$ future samples, or, equivalently, what is the maximum forecast horizon $h$ when $p$ historical samples are available?
\end{ques}
Seeking a valid answer to the above question is of great practical significance, especially given the increasing demand for long-term forecasting (i.e., $h$ is large)~\cite{lfsf:2015:eaai}. To this end, we would like to try approaching the TSF problem from the perspective of \emph{compressed sensing}~\cite{Candes:2008:spm,candes:2005:tit,cs:2012:mark}. Namely, we convert Problem~\ref{pb:mtsf} into a more general problem as follows. For a tensor-valued time series $\{\mathbf{M}_t\}_{t=1}^{p+h}$ with $\mathbf{M}_t\in\Re^{m_1\times{}\cdots\times{}m_{n-1}}$, we define an order-$n$ tensor $\mathbf{L}_0\in\Re^{m_1\times{}\cdots\times{}m_{n}}$ as
\begin{align}
[\mathbf{L}_0]_{i_1,\cdots,i_n} = [\mathbf{M}_{i_n}]_{i_1,\cdots,i_{n-1}}, 1\leq{}i_j\leq{}m_j, 1\leq{}j\leq{}n,
\end{align}
where $m_n = p+h$ and $[\cdot]_{i_1,\cdots{},i_n}$ denotes the $(i_1,\cdots,i_n)$th entry of an order-$n$ tensor. In other words, $\mathbf{L}_0$ is formed by concatenating a sequence of order-$(n-1)$ tensors into an order-$n$ one. Subsequently, we define a sampling set $\Omega\subset\{1,\cdots,m_1\}\times\cdots\times\{1,\cdots,m_n\}$ as in the following:
\begin{align}\label{eq:construct:omega}
&\Omega = \{(i_1,\cdots,i_n): \\\nonumber
&\quad{}1\leq{}i_j\leq{}m_j, \forall{}1\leq{}j\leq{}n-1 \textrm{ and } 1\leq{}i_n\leq{}p\}.
\end{align}
That is, $\Omega$ is a set consisting of the locations of the observations in $\{\mathbf{M}_t\}_{t=1}^{p}$, while the complement set of $\Omega$, denoted as $\Omega^\bot$, stores the locations of the future entries in $\{\mathbf{M}_t\}_{t=p+1}^{p+h}$. With these notations, we turn to a more inclusive problem named \emph{tensor completion with arbitrary sampling} (TCAS).
\begin{prob}[Tensor Completion with Arbitrary Sampling]\label{pb:tc}
Let $\mathbf{L}_0\in\mathbb{R}^{m_1\times\cdots\times{}m_n}$ be the target tensor of interest. Suppose that we are given a subset of the entries in $\mathbf{L}_0$ and a sampling set
$\Omega\subset\{1,\cdots,m_1\}\times\cdots\times\{1,\cdots,m_n\}$ consisting of the locations of observed entries. The configuration of $\Omega$ is quite progressive, in a sense that the locations of the observed entries are allowed to be distributed \textbf{arbitrarily}. Can we restore the target tensor $\mathbf{L}_0$? If so, under which conditions?
\end{prob}
In this way, Problem~\ref{pb:mtsf} is incorporated into the scope of \emph{tensor completion}~\cite{akshay:2013:nips,Yuan2016OnTC,liuxy:2020:tit,wangjl:2021:tip}, which is to fill in the missing entries of a partially observed tensor. Such a scenario has an immediate advantage; that is, it becomes straightforward to handle the difficult cases where the historical part $\{\mathbf{M}_t\}_{t=1}^{p}$ itself is incomplete. This is because, regardless of whether the historical part is complete or not, the task always boils down to recovering $\mathbf{L}_0$ from $\mathcal{P}_{\Omega}(\mathbf{L}_0)$, where $\mathcal{P}_{\Omega}(\cdot)$ is the orthogonal projection onto the subspace of tensors supported on $\Omega$. Moreover, to obtain an accurate answer to Question~\ref{ques:sc}, there is no much loss to consider instead Problem~\ref{pb:tc}: The success condition for solving TCAS is dominated by the worst configuration of $\Omega$, and, coincidentally, the sampling pattern of forecasting is typically the worst case, as we will confirm in Section~\ref{sec:exp:simu}.

While appealing, the goal of TCAS seems ``unrealistic'' according to the existing tensor completion theories~\cite{Candes:2009:math,gross:tit:2011,tc:tpami:2013:ye,tc:tip:2018:zhou,lu:pami:2020}. Consider the case of $n=2$, i.e., matrix completion with arbitrary sampling. In this case, for the target matrix $\mathbf{L}_0$ to be recoverable from a subset of its entries, it has been shown in~\cite{liu:nips:2017,liu:tpami:2021} that the together of $\mathbf{L}_0$ and $\Omega$ ought to obey an \emph{isomeric condition}, which is obviously violated by the setup of arbitrary sampling---note that arbitrary sampling allows some columns and rows of $\mathbf{L}_0$ to be \emph{wholly missing}. Yet, we would like to clarify that the isomeric condition is indispensable only when the circumstance is restricted within the scope of Low-Rank Matrix Completion (LRMC)~\cite{Candes:2009:math}, where the target $\mathbf{L}_0$ has the lowest Tucker rank among all the matrices that satisfy the constraint given by the observed entries. Beyond the field of Tucker low-rankness, as we will show, it is indeed possible to solve TCAS without imposing any restrictions on the sampling pattern.

Interestingly, the tool for addressing TCAS had already been established several decades ago, namely the prominent \emph{Fourier sparsity}---the phenomenon that most of the Fourier coefficients of a signal are zero or approximately so. Concretely, TCAS can be addressed by a straightforward, but not specifically studied method termed Discrete Fourier Transform (DFT) based $\ell_1$ minimization ($\mathrm{DFT}_{\ell_1}$):
\begin{align}\label{eq:tc:dftl1:exact}
&\min_{\mathbf{L}}\norm{\mathcal{F}(\mathbf{L})}_1, \textrm{ s.t. }\mathcal{P}_{\Omega}(\mathbf{L}-\mathbf{L}_0)=0,
\end{align}
where $\mathcal{F}(\cdot)$ is the DFT operator, and $\|\cdot\|_1$ denotes the $\ell_1$ norm of a tensor seen as a long vector. Hereafter, denote by $\mathrm{card}(\cdot)$ and $\mathrm{vec}(\cdot)$ the cardinality of a set and the vectorization of a tensor, respectively. Let $\mathbf{z}=\mathrm{vec}(\mathcal{F}(\mathbf{L}))$ in~\eqref{eq:tc:dftl1:exact}. Then it is easy to see that $\mathrm{DFT}_{\ell_1}$ is a special case of the well-known $\ell_1$ minimization problem:
\begin{align}\label{eq:dftl1:equi}
\min_{\mathbf{z}\in\mathbb{C}^m} \|\mathbf{z}\|_1, \textrm{ s.t. } \Phi\mathbf{z}=\mathbf{y},
\end{align}
where $m=\Pi_{i=1}^nm_i$, $\Phi\in\mathbb{C}^{\tilde{p}\times{}m}$ is called the \emph{sensing matrix}, and $\mathbf{y}\in\mathbb{C}^{\tilde{p}}$ is a vector consisting of $\tilde{p}$ observations. The general case of~\eqref{eq:dftl1:equi} had been extensively studied in the literature, e.g.,~\cite{tit:2001:donoho,Candes:2008:spm,candes:2005:tit,cs:2012:mark,tit:2015:xu}. As for our $\mathrm{DFT}_{\ell_1}$, it is configured that $\tilde{p}=\mathrm{card}(\Omega)$, $\Phi$ is determined by the together of the \emph{Fourier transform matrices} (see Section~\ref{sec:dfttran}) and the sampling set $\Omega$, and $\mathbf{y}$ is composed of the observed entries in $\mathbf{L}_0$. In this particular case, whenever $n=1$ (i.e., $\mathbf{L}_0$ is a vector) and the observed entries are chosen randomly or specifically, as proven in~\cite{iss:2006:mark,iss:2010:Haupt}, the sensing matrix $\Phi$ satisfies the Restricted Isometry Property (RIP)~\cite{candes:2005:tit} and thus program~\eqref{eq:dftl1:equi} can be successful, implying that $\mathrm{DFT}_{\ell_1}$ can identify the target $\mathbf{L}_0$. These results, however, are inapplicable to TCAS, in which the sampling pattern is allowed to be \emph{arbitrary} and $n$ may be any natural number. Moreover, as we will show in Section~\ref{sec:optalg}, the optimization problem in~\eqref{eq:tc:dftl1:exact} can be efficiently solved within $\mathcal{O}(m\log{}m)$ time, whereas, by sharp contrast, $\mathcal{O}(m^3)$ time is usually needed for finding the optimal solution to~\eqref{eq:dftl1:equi}.
\begin{figure}[h!]
\begin{center}
\includegraphics[width=0.48\textwidth]{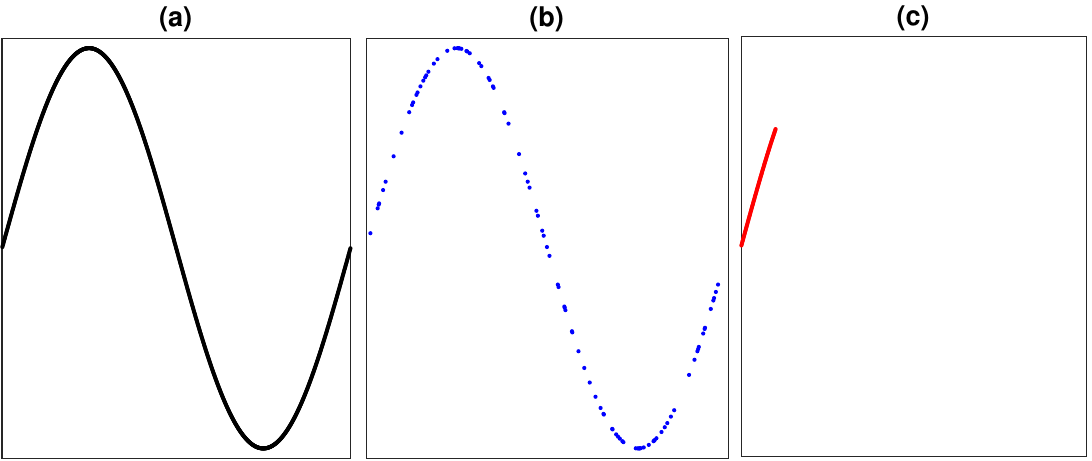}\vspace{-0.1in}
\caption{On the sampling condition for solving TCAS. (a) A 1000-dimensional signal we wish to recover. (b) 100 observations selected uniformly at random. (c) 100 observations chosen according to the setup of forecasting. While it should be doubtless that the signal in (a) can be reconstructed based on the observations in (b), it is not so real that the observations in (c) are sufficient for identifying the signal in (a).}\label{fig:complexity}
\end{center}
\end{figure}

We will prove that $\mathrm{DFT}_{\ell_1}$ strictly succeeds in recovering the target $\mathbf{L}_0$, as long as the sampling complexity, $\rho_0 = \mathrm{card}(\Omega)/m$ ($m=\Pi_{i=1}^nm_i$ and $\mathrm{card}(\Omega)$ is the number of observed entries), satisfies
\begin{align}
\rho_0 >1-\mathcal{O}(1/\|\mathcal{F}(\mathbf{L}_0)\|_0),
\end{align}
where $\|\cdot\|_0$ is the $\ell_0$ norm, i.e., the number of nonzero entries of a tensor. As one may have noticed, the complexity $1-\mathcal{O}(1/\|\mathcal{F}(\mathbf{L}_0)\|_0)$ could be very close to 1. Such a seemingly ``cumbersome'' sampling condition, in fact, is not caused by the limits of mathematical proofs but instead due to the nature of the setup of arbitrary sampling, as illustrated in Figure~\ref{fig:complexity}. Notably, the condition is useful for forecasting, as it gives a reasonable answer to Question~\ref{ques:sc}. Consider the setup of Problem~\ref{pb:mtsf}, where the observed samples are complete. Whenever $\mathcal{F}(\mathbf{L}_0)$ is sparse in a sense that $\|\mathcal{F}(\mathbf{L}_0)\|_0\leq\mathcal{O}(m^{\beta})$ with $0\leq\beta<1$, the condition $\rho_0 >1-\mathcal{O}(1/\|\mathcal{F}(\mathbf{L}_0)\|_0)$ requires
\begin{align}
p>\mathcal{O}(\tilde{m}^{\beta/(1-\beta)}h^{1/(1-\beta)}),
\end{align}
where $\tilde{m}=\Pi_{j=1}^{n-1}m_j$ is the dimension of the tensor $\mathbf{M}_t$, $1\leq{}t\leq{}p+h$. In other words, the minimum sampling size grows polynomially as the forecast horizon $h$, and thus long-term forecasting is entirely possible---but would be rather challenging.

While theoretically effective and computationally efficient, $\mathrm{DFT}_{\ell_1}$ suffers from a drawback that the time dimension is treated in the same way as the non-time dimensions, probably leading to imperfect results. To achieve better recovery performance, we further propose a regularizer called \emph{convolution nuclear norm}, which is the sum of \emph{convolution eigenvalues}~\citep{liu:tip:2014}, and which are generalization of Fourier frequencies. Specifically, the convolution nuclear norm of a tensor is the \emph{nuclear norm}~\cite{phd_2002_nuclear,siam_2010_minirank} of its \emph{convolution matrix} (see Section~\ref{sec:convmtx}), depicting the Tucker low-rankness of the convolution matrix of the tensor---this is the so-called \emph{convolutional low-rankness}. The derived method, Convolution Nuclear Norm Minimization (CNNM), attempts to tackle TCAS by
\begin{align}\label{eq:tc:cnnm:exact}
&\min_{\mathbf{L}} \norm{\mathcal{A}_k(\mathbf{L})}_{*}, \textrm{ s.t. }\mathcal{P}_{\Omega}(\mathbf{L}-\mathbf{L}_0)=0,
\end{align}
where $\|\cdot\|_*$ denotes the nuclear norm of a matrix, $\mathcal{A}_k(\cdot)$ is a linear map from $\mathbb{R}^{m_1\times\cdots\times{}m_n}$ to $\mathbb{R}^{m\times{}k}$ ($m=\Pi_{i=1}^nm_i$) such that $\mathcal{A}_k(\mathbf{L})$ produces the convolution matrix of $\mathbf{L}$, $k=\Pi_{i=1}^nk_i$ is the kernel size used in defining $\mathcal{A}_k(\cdot)$,
and $\{k_i: 1\leq{}k_i\leq{}m_i\}_{i=1}^n$ is a set of $n$ hyper-parameters (usually $k_i=\mathcal{O}(m_i)$). When $k_i=m_i,\forall{}i$, the convolution nuclear norm is exactly the $\ell_1$ norm of the Fourier transform, and thus CNNM falls back to $\mathrm{DFT}_{\ell_1}$.

As expected, CNNM also guarantees to recover the target $\mathbf{L}_0$ from a subset of its entries chosen arbitrarily, as long as
\begin{align}
\rho_0>1-\mathcal{O}(1/r_k(\mathbf{L}_0)),
\end{align}
where $r_k(\cdot)$, called \emph{convolution rank}, is the rank of the convolution matrix of a tensor. Generally speaking, the lower bound of $\rho_0$, called the \emph{sampling bound}, is functionally dependent on the kernel size $k$. Whenever $k_i=m_i,\forall{}i$ and thus $k=m$, we have $r_m(\mathbf{L}_0)=\|\mathcal{F}(\mathbf{L}_0)\|_0$ (see Section~\ref{sec:dftconv}). In this case, the sampling condition also rolls back to $\rho_0 >1-\mathcal{O}(1/\|\mathcal{F}(\mathbf{L}_0)\|_0)$. By choosing proper kernel sizes with an attempt to minimize the sampling bound (see Section~\ref{sc:main:diss}), CNNM may outperform dramatically $\mathrm{DFT}_{\ell_1}$, in terms of recovery accuracy. To summarize, the contributions of this paper mainly include:
\begin{itemize}
\item[$\diamond$] We explore the TSF problem from the view point of a new, inclusive problem named TCAS, and propose a novel method termed CNNM for studying TCAS, accomplishing an effective method for TSF as well. Remarkably, for the first time, our analysis provides a meaningful answer to the fundamental question of how many historical samples are required for predicting a given number of future samples.
\item[$\diamond$] Even though the general $\ell_1$ minimization problem in~\eqref{eq:dftl1:equi} was investigated in a great many papers, $\mathrm{DFT}_{\ell_1}$ has not been carefully explored before. To our knowledge, we are the first to prove that $\mathrm{DFT}_{\ell_1}$ strictly succeeds in solving TCAS, as long as the sampling complexity $\rho_0$ surpasses certain threshold.
\item[$\diamond$] The proposed convolutional low-rankness provides a new tool for generalizing and penetrating the classic concept of Fourier sparsity. Whenever a signal is smooth in the sense of circular shift, it is provable that the convolution matrix of the signal is approximately low-rank (see Section~\ref{sec:convlowrankness}). This result helps to explain the widely observed phenomenon that smooth, bounded signals (e.g., images and videos) are often approximately sparse in the Fourier domain.
\end{itemize}

The rest of this paper is organized as follows. Section~\ref{sec:note_pre} summarizes the mathematical notations used throughout the paper and introduces some preliminary knowledge, explaining the technical insights behind the proposed methods as well. Section~\ref{sec:cnnm} is mainly consist of theoretical analysis. Section~\ref{sec:proof} shows the mathematical proofs of the presented theorems. Section~\ref{sec:exp} demonstrates empirical results and Section~\ref{sec:con} concludes this paper.
\section{Notations and Preliminaries}\label{sec:note_pre}
\subsection{Summary of Main Notations}
Bold capital letters such as $\mathbf{X}$ denote order-$n$ tensors ($n\geq1$), and single numbers are denoted by either lowercase or Greek letters. Capital letters (e.g., $X$) and bold lowercase letters (e.g., $\mathbf{x}$) are used to represent matrices and vectors, respectively. For an order-$n$ tensor $\mathbf{X}$, $[\mathbf{X}]_{i_1,\cdots{},i_n}$ is the $(i_1,\cdots,i_n)$th entry of $\mathbf{X}$. Two types of tensor norms are used frequently throughout the paper: the Frobenius norm given by $\|\mathbf{X}\|_F=\sqrt{\sum_{i_1,\cdots{},i_n}|[\mathbf{X}]_{i_1,\cdots{},i_n}|^2}$, and the $\ell_1$ norm denoted by $\|\cdot\|_1$ and given by $\|\mathbf{X}\|_1=\sum_{i_1,\cdots{},i_n}|[\mathbf{X}]_{i_1,\cdots{},i_n}|$, where $|\cdot|$ denotes the magnitude of a real or complex number. Another two frequently used norms are the \emph{operator norm} and \emph{nuclear norm}~\cite{phd_2002_nuclear,siam_2010_minirank} of order-2 tensors (i.e., matrices), denoted by $\|\cdot\|$ and $\|\cdot\|_*$, respectively.

Calligraphic letters, such as $\mathcal{F}$, $\mathcal{P}$ and $\mathcal{A}$, are used to denote linear operators. In particular, $\mathcal{I}$ denotes the identity operator and $\Id$ is the identity matrix. For a linear operator $\mathcal{L}: \mathbb{H}_1\rightarrow\mathbb{H}_2$ between Hilbert spaces, its Hermitian adjoint (or conjugate) is denoted as $\mathcal{L}^*$ and given by
\begin{align}\label{eq:adjoint}
&\langle\mathcal{L}(\mathbf{X}), \mathbf{Y}\rangle_{\mathbb{H}_2} = \langle{}\mathbf{X}, \mathcal{L}^*(\mathbf{Y})\rangle_{\mathbb{H}_1}, \forall{}\mathbf{X}\in\mathbb{H}_1, \mathbf{Y}\in\mathbb{H}_2,
\end{align}
where $\langle\cdot,\cdot\rangle_{\mathbb{H}_i}$ is the inner product in the Hilbert space $\mathbb{H}_i$ ($i=1$ or $2$). But the subscript is omitted whenever $\mathbb{H}_i$ refers to a Euclidian space.

The symbol $\Omega$ is reserved to denote the sampling set consisting of the locations of observed entries. For $\Omega\subset\{1,\cdots,m_1\}$ $\times\cdots\times\{1,\cdots,m_n\}$, its \emph{mask tensor} is denoted by $\mathbf{\Theta}_{\Omega}$ and given by
\begin{align}
[\mathbf{\Theta}_\Omega]_{i_1,\cdots{},i_n}=\left\{\begin{array}{cc}
1,&\text{if }(i_1,\cdots,i_n)\in\Omega,\\
0, &\text{otherwise.}\\
\end{array}\right.
\end{align}
Denote by $\mathcal{P}_{\Omega}$ the orthogonal projection onto $\Omega$. Then we have the following:
\begin{align}\label{eq:omega}
&\mathcal{P}_\Omega(\mathbf{X}) = \mathbf{\Theta}_{\Omega}\circ{}\mathbf{X} \quad\textrm{and}\quad\Omega = \mathrm{supp}(\mathbf{\Theta}_{\Omega}),
\end{align}
where $\circ$ denotes the Hadamard product and $\mathrm{supp}(\cdot)$ is the support set of a tensor.

In most cases, we work with real-valued matrices (i.e., order-2 tensors). For a matrix $X$, $[X]_{i,:}$ is its $i$th row, and $[X]_{:,j}$ is its $j$th column. Let $\omega=\{j_1,\cdots,j_l\}$ be a 1D sampling set. Then $[X]_{\omega,:}$ denotes the submatrix of $X$ obtained by selecting the rows with indices $j_1,\cdots,j_l$, and similarly for $[X]_{:,\omega}$. For a 2D sampling set $\bar{\Omega}\subset{}\{1,\cdots,\bar{m}_1\} \times\{1,\cdots,\bar{m}_2\}$, we imagine it as a sparse matrix and define its ``rows'', ``columns'' and ``transpose'' accordingly. The $i$th row of $\bar{\Omega}$ is denoted by $\bar{\Omega}_i$ and given by $\bar{\Omega}_i = \{i_2: (i_1,i_2)\in\bar{\Omega}, i_1 = i\}$, the $j$th column is defined as $\bar{\Omega}^j = \{i_1: (i_1,i_2)\in\bar{\Omega}, i_2 = j\}$, and the transpose is given by $\bar{\Omega}^T = \{(i_2,i_1): (i_1,i_2)\in\bar{\Omega}\}$.

Letters $U$ and $V$ are reserved for the left and right singular vectors of matrices, respectively. The orthogonal projection onto the column space $U$ is denoted by $\mathcal{P}_U$ and given by $\mathcal{P}_U(X)=UU^TX$, and similarly for the row space $\mathcal{P}_V(X)=XVV^T$. The same notation is also used to represent a subspace of matrices, e.g., we say that $X\in\mathcal{P}_U$ for any matrix $X$ obeying $\mathcal{P}_U(X)=X$.
\subsection{Multi-Dimensional DFT}\label{sec:dfttran}
Multi-dimensional DFT, also known as multi-directional DFT, is one of the most widely-used tool for signal processing. Its definition is commonly available in some public documents. Here, we shall present a definition that would be easy for engineers to understand. First consider the case of $n=1$, i.e., the DFT of a vector $\mathbf{x}\in\mathbb{R}^m$. In this particular case, $\mathcal{F}(\mathbf{x})$ can be simply expressed as
\begin{align}
\mathcal{F}(\mathbf{x}) = U\mathbf{x},
\end{align}
with $U\in\mathbb{C}^{m\times{}m}$ being a complex-valued, symmetric matrix that satisfies $U^HU = UU^H=m\Id$, where $(\cdot)^H$ is the conjugate transpose of a complex-valued matrix. The matrix $U$, called the \emph{Fourier transform matrix}, is consist of data-independent numbers. More precisely, the real and imaginary components of $[U]_{i,j}$ are $\cos(2\pi(i-1)(j-1)/m)$ and $-\sin(2\pi(i-1)(j-1)/m)$, respectively.

Similarly, when $n=2$, the DFT of a matrix $X\in\mathbb{R}^{m_1\times{}m_2}$ is given by
\begin{align}
\mathcal{F}(X) = U_1XU_2,
\end{align}
where $U_1\in\mathbb{C}^{m_1\times{}m_1}$ and $U_2\in\mathbb{C}^{m_2\times{}m_2}$ are the Fourier transform matrices for the columns and rows of $X$, respectively. For a general order-$n$ ($n\geq1$) tensor $\mathbf{X}\in\mathbb{R}^{m_1\times\cdots\times{}m_n}$, its DFT is given by
\begin{align}
\mathcal{F}(\mathbf{X}) = \mathbf{X}\times_1{}U_1\cdots\times_n{}U_n,
\end{align}
where $U_i\in\mathbb{C}^{m_i\times{}m_i}$ is the Fourier transform matrix for the $i$th dimension, and $\times_j$ ($1\leq{}j\leq{}n$) is the \emph{mode-$j$ product}~\cite{hosvd:siam:2000} between tensors and matrices.
\subsection{Convolution Matrix}\label{sec:convmtx}
Discrete convolution, which is probably the most fundamental concept in signal processing, plays an important role in this paper. Its definition---though mostly unique---has multiple variants, depending on which \emph{boundary condition} is used. What we consider in this paper is the \emph{circular convolution}, i.e., the convolution equipped with \emph{circulant boundary condition}~\cite{mamdouh:sip:2012}. Let's begin with the simple case of $n=1$, i.e., the circular convolution procedure of converting $\mathbf{x}\in\mathbb{R}^m$ and $\mathbf{k}\in\mathbb{R}^k$ ($k\leq{}m$) into $\mathbf{x}\star{}\mathbf{k}\in\mathbb{R}^m$:
\begin{align}
[\mathbf{x}\star{}\mathbf{k}]_i = \sum_{s=1}^k [\mathbf{x}]_{i-s}[\mathbf{k}]_{s}, i = 1,\cdots,m,
\end{align}
where $\star$ denotes the convolution operator, and it is assumed that $[\mathbf{x}]_{i-s} = [\mathbf{x}]_{i-s+m}$ for $i\leq{}s$; this is the so-called circulant boundary condition. Throughout this paper, we assume $k\leq{}m$ and refer to the smaller tensor $\mathbf{k}$ as the \emph{kernel}. In general, the convolution operator is linear and can be converted into matrix multiplication:
\begin{align}
\mathbf{x}\star{}\mathbf{k} = \mathcal{A}_k(\mathbf{x})\mathbf{k}, \forall{}\mathbf{x}\in\Re^m, \mathbf{k}\in\Re^k,
\end{align}
where $\mathcal{A}_k(\cdot)$ is the \emph{convolution matrix} of a vector, and the subscript $k$ is put to remind the readers that the convolution matrix is always associated with a certain kernel size $k$. In the light of circular convolution, the convolution matrix of a vector $\mathbf{x}=[x_1,\cdots,x_m]^T$ (i.e., $[\mathbf{x}]_{i}=x_i$) is a truncated circular matrix of size $m\times{}k$:
\begin{align}
\mathcal{A}_k(\mathbf{x}) = \left[\begin{array}{cccc}
x_1 & x_m&\cdots&x_{m-k+2}\\
x_2 & x_1&\cdots&x_{m-k+3}\\
\vdots&\vdots&\vdots&\vdots\\
x_m & x_{m-1}&\cdots&x_{m-k+1}
\end{array}\right].
\end{align}
In other words, the $j$th column of $\mathcal{A}_k(\mathbf{x})$ is given by $\mathcal{S}^{j-1}(\mathbf{x})$, where $\mathcal{S}$ is an operator that circularly shifts the elements of a vector by one position; namely,
\begin{align}\label{eq:circshift}
\mathcal{S}(\mathbf{x}) = [x_m,x_1,x_2,\cdots,x_{m-1}]^T.
\end{align}
This shift operator is implemented by the Matlab function ``circshift''. In the special case of $k=m$, the convolution matrix $\mathcal{A}_m(\mathbf{x})$ is exactly an $m\times{}m$ circular matrix.

Now we turn to the general case of any $n\geq1$. Let $\mathbf{X}\in\mathbb{R}^{m_1\times\cdots\times{}m_n}$ and $\mathbf{K}\in\mathbb{R}^{k_1\times\cdots\times{}k_n}$ be two order-$n$ tensors. Again, it is configured that $k_j\leq{}m_j, \forall{}1\leq{}j\leq{}n$, and $\mathbf{K}$ is called the kernel. Then the procedure of circularly convoluting $\mathbf{X}$ and $\mathbf{K}$ into $\mathbf{X}\star{}\mathbf{K}\in\mathbb{R}^{m_1\times\cdots\times{}m_n}$ is given by
\begin{align}
[\mathbf{X}\star{}\mathbf{K}]_{i_1,\cdots{},i_n} = \sum_{s_1,\cdots,s_n}[\mathbf{X}]_{i_1-s_1,\cdots{},i_n-s_n}[\mathbf{K}]_{s_1,\cdots,s_n}.
\end{align}
The above convolution procedure can be also converted into matrix multiplication. Let $\mathrm{vec}(\cdot)$ be the vectorization of a tensor, then we have
\begin{align}\label{eq:conv2mtxproduct}
&\mathrm{vec}(\mathbf{X}\star{}\mathbf{K}) = \mathcal{A}_k(\mathbf{X})\mathrm{vec}(\mathbf{K}), \forall{}\mathbf{X}, \mathbf{K},
\end{align}
where $\mathcal{A}_k(\mathbf{X})\in\Re^{m\times{}k}$, with $m=\Pi_{j=1}^nm_j$ and $k=\Pi_{j=1}^nk_j$, is the convolution matrix of the tensor $\mathbf{X}$. To compute the convolution matrix of an order-$n$ tensor, one just needs to replace the one-directional circular shift operator given in~\eqref{eq:circshift} with a multi-directional one, so as to stay in step with the structure of high-order tensors. To be more precise, let $\mathcal{S}(\mathbf{X},u,v)$ be the operator that circularly shifts the elements in the tensor $\mathbf{X}$ by $u$ positions along the $v$th dimension (or direction); this operator is also implemented by the Matlab function ``circshift''. For any index $(t_1,\cdots,t_n)$ with $1\leq{}t_q\leq{}k_q$ ($1\leq{}q\leq{}n$), define an invertible operator $\mathcal{T}_{(t_1,\cdots,t_n)}: \mathbb{R}^{m_1\times\cdots\times{}m_n}\rightarrow\mathbb{R}^m$ as
\begin{align}
 \mathcal{T}_{(t_1,\cdots,t_n)}(\mathbf{X}) = \mathrm{vec}(\mathbf{X}_n),
\end{align}
where $\mathbf{X}_n$ is determined by the following recursion rule:
\begin{align}\label{eq:multi-shift}
\mathbf{X}_q=\mathcal{S}(\mathbf{X}_{q-1},t_q-1,q), 1\leq{}q\leq{}n, \mathbf{X}_0= \mathbf{X}.
\end{align}
Assume conveniently that $k_0 = 1$, and let $j=1+\sum_{q=1}^{n}(t_q-1)\Pi_{b=0}^{q-1}k_b$. Then it may be easily seen that $j$ ranges from 1 to $k$ while $t_q$ increases from 1 to $k_q$, $q=1,\cdots,n$, and the $j$th column of $\mathcal{A}_k(\mathbf{X})$ is given by
\begin{align}\label{eq:convmtx:tensor}
[\mathcal{A}_k(\mathbf{X})]_{:,j} = \mathcal{T}_{(t_1,\cdots,t_n)}(\mathbf{X}).
\end{align}
As a result, when $k_i=m_i$, $\forall{}i$, the convolution matrix of an order-$n$ tensor is a \emph{block-circular matrix}~\cite{MonC2008} taking a form as in the following (up to some permutation):
\begin{align}\label{eq:blockcirc}
\left[\begin{array}{cccc}
C_1 & C_{\bar{m}}&\cdots&C_2\\
C_2 & C_1&\cdots&C_3\\
\vdots&\vdots&\vdots&\vdots\\
C_{\bar{m}-1}&C_{\bar{m}-2}&\cdots&C_{\bar{m}}\\
C_{\bar{m}} & C_{\bar{m}-1}&\cdots&C_1
\end{array}\right],
\end{align}
where $C_i\in\mathbb{R}^{m_1\times{}m_1}$ ($1\leq{}i\leq{}\bar{m}$) is also a circular matrix and $\bar{m}=m/m_1$. The general case of $k_i\leq{}m_i$ is similar. Namely, $\mathcal{A}_k(\cdot)$ is a truncated block-circular matrix consisting of $m/m_1$ row partitions and $k/k_1$ column partitions, with each block being a truncated circular matrix of size $m_1\times{}k_1$.
\subsection{Connections Between DFT and Convolution}\label{sec:dftconv}
Whenever the kernel $\mathbf{K}$ has the same size as the tensor $\mathbf{X}$, i.e., $k_j=m_j$, $\forall{}1\leq{}j\leq{}n$, the resultant convolution matrix, $\mathcal{A}_m(\mathbf{X})$, is block-circular and therefore can be diagonalized by DFT. The cases of $n=1$ and $n=2$ are well-known and have been widely used in the literature, e.g.,~\cite{Ng:1999:FAD,wang:siam:2008}. In effect, the conclusion holds for any $n\geq1$, as pointed out by Proposition 2.5 of~\cite{MonC2008}. More precisely, let the DFT of $\mathbf{X}$ be $\mathcal{F}(\mathbf{X}) = \mathbf{X}\times_1{}U_1\cdots\times_n{}U_n$, and denote $U = U_1\otimes\cdots\otimes{}U_n$ with $\otimes$ being the Kronecker product. Then $U\mathcal{A}_m(\mathbf{X})U^H$ is a diagonal matrix, namely $U\mathcal{A}_m(\mathbf{X})U^H=m\Sigma$ with $\Sigma=\diag{\sigma_1,\cdots,\sigma_m}$. Since the first column of $U$ is a vector of all ones, it is easy to see that
\begin{align}
&\mathrm{vec}(\mathcal{F}(\mathbf{X})) = U\mathrm{vec}(\mathbf{X}) = U[\mathcal{A}_m(\mathbf{X})]_{:,1} \\\nonumber
&=[U\mathcal{A}_m(\mathbf{X})]_{:,1} = [\Sigma{}U]_{:,1} = [\sigma_1,\cdots,\sigma_m]^T.
\end{align}
That is, the eigenvalues of the convolution matrix $\mathcal{A}_m(\mathbf{X})$ are exactly the Fourier frequencies given by $\mathcal{F}(\mathbf{X})$. Hence, for any $\mathbf{X}\in\mathbb{R}^{m_1\times\cdots\times{}m_n}$, we have
\begin{align}\label{eq:dftnucconn}
\norm{\mathcal{F}(\mathbf{X})}_0 = \rank{\mathcal{A}_m(\mathbf{X})},\textrm{ }\norm{\mathcal{F}(\mathbf{X})}_1 = \norm{\mathcal{A}_m(\mathbf{X})}_{*},
\end{align}
where $\|\cdot\|_0$ is the $\ell_0$ norm of a tensor, and $\|\cdot\|_*$ is the nuclear norm of a matrix. As a consequence, the $\mathrm{DFT}_{\ell_1}$ program~\eqref{eq:tc:dftl1:exact} is equivalent to the following real-valued convex optimization problem:
\begin{align}\label{eq:tc:dftl1:eqiv}
&\min_{\mathbf{L}} \norm{\mathcal{A}_m(\mathbf{L})}_{*}, \textrm{ s.t. }\mathcal{P}_{\Omega}(\mathbf{L}-\mathbf{L}_0)=0.
\end{align}
Hence, $\mathrm{DFT}_{\ell_1}$ is a special case of CNNM. Although real-valued and convex, the above problem is hard to be solved efficiently, thereby the formulation is used only for the purpose of theoretical analysis.
\begin{figure}[h!]
\begin{center}
\includegraphics[width=0.48\textwidth]{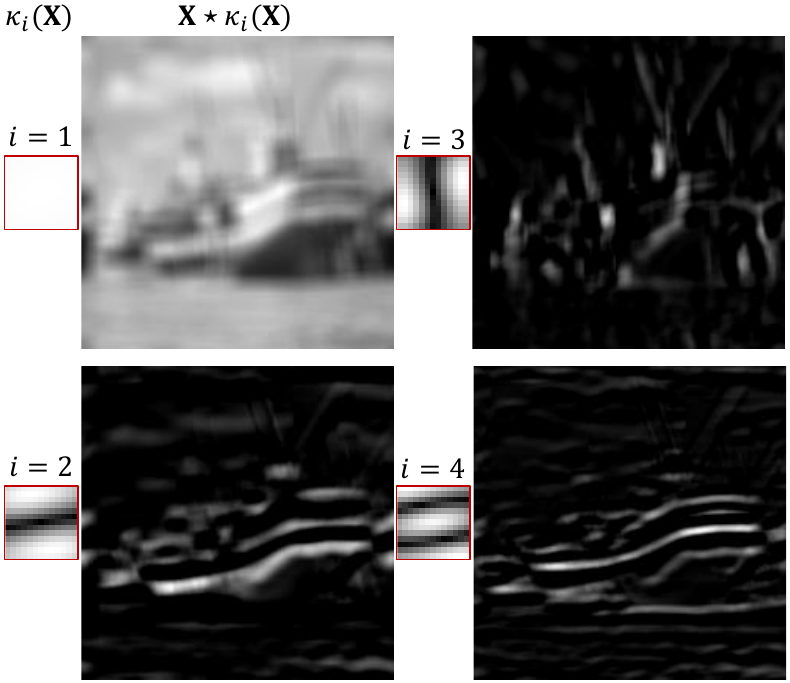}\vspace{-0.1in}
\caption{Visualizing the convolution eigenvectors and filtered signals, using a $200\times200$ Boats image as the experimental data and setting $k_1=k_2=13$. For the sake of visual effects, the convolution eigenvector $\kappa_i(\mathbf{X})$ is processed such that its values are in the range of 0 and 1, and the filtered signal $\mathbf{X}\star\kappa_i(\mathbf{X})$ is normalized to have a maximum of 1.}\label{fig:ceigs}
\end{center}
\end{figure}
\subsection{Convolution Eigenvalues}
The concept of convolution eigenvalues is first proposed and investigated by~\cite{liu:tip:2014}, under the context of image deblurring. Though made specific to order-2 tensors (i.e., matrices), the definitions given in~\cite{liu:tip:2014} can be easily generalized to order-$n$ tensors with any $n\geq1$.
\begin{defn}[Convolution Eigenvalues and Eigenvectors~\cite{liu:tip:2014}]\label{def:eigen} For a tensor $\mathbf{X}\in\mathbb{R}^{m_1\times\cdots\times{}m_n}$ associated with a certain kernel size $k_1\times\cdots\times{}k_n$ $(k_j\leq{}m_j$, $\forall{}j)$, its first convolution eigenvalue is denoted as $\sigma_1(\mathbf{X})$ and given by
\begin{eqnarray}
\sigma_1(\mathbf{X})=\max_{\mathbf{K}\in\mathbb{R}^{k_1\times\cdots\times{}k_n}}\|\mathbf{X}\star{}\mathbf{K}\|_F,\textrm{ s.t. } \|\mathbf{K}\|_F=1.
\end{eqnarray}
The maximizer to above problem is called the first convolution eigenvector, denoted as $\kappa_1(\mathbf{X})\in\mathbb{R}^{k_1\times\cdots\times{}k_n}$. Similarly, the $i$th ($i=2,\ldots,k$, $k=\Pi_{j=1}^nk_j$) convolution eigenvalue, denoted as $\sigma_i(\mathbf{X})$, is defined as
\begin{eqnarray}\sigma_i(\mathbf{X})&=&\max_{\mathbf{K}\in\mathbb{R}^{k_1\times\cdots\times{}k_n}}\|\mathbf{X}\star{}\mathbf{K}\|_F, \\\nonumber
 &\textrm{s.t.}& \|\mathbf{K}\|_F=1, \langle{}\mathbf{K},\kappa_j(\mathbf{X})\rangle=0,\forall{}j<i.
\end{eqnarray}
The maximizer to above problem is the $i$th convolution eigenvector, denoted as $\kappa_i(\mathbf{X})\in\mathbb{R}^{k_1\times\cdots\times{}k_n}$.
\end{defn}

Figure~\ref{fig:ceigs} shows the first four convolution eigenvectors of a natural image. It can be seen that the convolution eigenvectors are essentially a series of ordered filters, in which the later ones have higher cut-off frequencies. Note that, unlike the standard singular values, the convolution eigenvalues of a fixed tensor are not fixed and may vary along with the kernel size.

Due to the relationship given in~\eqref{eq:conv2mtxproduct}, the convolution eigenvalues are nothing more than the singular values of the convolution matrix. Thus, the number of nonzero convolution eigenvalues of a tensor, called the \emph{convolution rank}, is simply the rank of the convolution matrix of the tensor. The so-called \emph{convolution nuclear norm}, defined as the sum of convolution eigenvalues, strictly equals to the nuclear norm of the convolution matrix.
\subsection{Convolution Coherence}
The same as in most studies about matrix completion, we also need to access the concept of \emph{coherence}~\cite{Candes:2009:math,liu:tsp:2016}. For a rank-$r$ matrix $Y\in\mathbb{R}^{a\times{}b}$, let its skinny SVD be $Y=U\Sigma{}V^T$ with $\Sigma\in\mathbb{R}^{r\times{}r}$. Then there are two coherence parameters, $\mu_u$ and $\mu_v$, for characterizing some properties of $Y$:
\begin{align}
\mu_u(Y) = \frac{a}{r}\max_{1\leq{}i\leq{}a}\|[U]_{i,:}\|_F^2,\textrm{ }\mu_v(Y) = \frac{b}{r}\max_{1\leq{}i\leq{}b}\|[V]_{i,:}\|_F^2,
\end{align}
where $[\cdot]_{i,:}$ is the $i$th row of a matrix. Accordingly, the \emph{convolution coherence} of a tensor $\mathbf{X}\in\mathbb{R}^{m_1\times\cdots\times{}m_n}$, denoted as $\mu_k(\mathbf{X})$, is defined as the coherence of the convolution matrix $\mathcal{A}_k(\mathbf{X})$:
\begin{align}\label{eq:coherence}
\mu_k(\mathbf{X}) =\max(\mu_u(\mathcal{A}_k(\mathbf{X})), \mu_v(\mathcal{A}_k(\mathbf{X}))).
\end{align}
Let $m=\Pi_{i=1}^nm_i$ and $k=\Pi_{i=1}^nk_i$. By definitions, $1\leq\mu_u(\mathcal{A}_k(\mathbf{X}))\leq{}m$ and $1\leq{}\mu_v(\mathcal{A}_k(\mathbf{X}))\leq{}k$, which simply lead to $1\leq\mu_k(\mathbf{X})\leq{}m$. Yet, due to the special property of the convolution matrix, the upper bound $m$ is hardly attainable and a more accurate range is given by
\begin{align}
1\leq\mu_k(\mathbf{X})\leq\frac{m}{k}\nu^2,
\end{align}
where $\nu$ is the \emph{condition number} of $\mathcal{A}_k(\mathbf{X})$, i.e., $\nu$ is the ratio of the largest singular value of $\mathcal{A}_k(\mathbf{X})$ to its smallest nonzero singular value.
It can be verified that the above lower bound is achieved if $\mathbf{X}$ is a constant tensor whose elements are all the same, and the upper bound is attained if $\mathbf{X}$ is a standard basis of $\mathbb{R}^{m_1\times\cdots\times{}m_n}$, $\forall{}1\leq{}k\leq{}m$.
\begin{proof}Let the skinny SVD of $\mathcal{A}_k(\mathbf{X})$ be $\mathcal{A}_k(\mathbf{X})=U\Sigma{}V^T$, where $\Sigma=\diag{\sigma_1,\cdots,\sigma_r}$, and $\sigma_1\geq\cdots\geq\sigma_r>0$ are the nonzero singular values of $\mathcal{A}_k(\mathbf{X})$. Considering the $\ell_2$ norm of the $j$th column of $\mathcal{A}_k(\mathbf{X})$, we have
\begin{align}
\|\mathbf{X}\|_F^2\hspace{-0.05in}=\hspace{-0.05in}\|[U\Sigma{}V^T]_{:,j}\|_F^2\hspace{-0.05in}=\hspace{-0.05in}\|\Sigma{}[V^T]_{:,j}\|_F^2\geq{}\sigma_r^2\|[V]_{j,:}\|_F^2,
\end{align}
which gives that
\begin{align}
\|[V]_{j,:}\|_F^2\leq\frac{\|\mathbf{X}\|_F^2}{\sigma_r^2}=\frac{\sum_{i=1}^r\sigma_i^2}{k\sigma_r^2}\leq\frac{r}{k}\nu^2,
\end{align}
from which it follows that $\mu_v(\mathcal{A}_k(\mathbf{X}))\leq{}\nu^2$. Furthermore, since the $\ell_2$ norm of any row of $\mathcal{A}_k(\mathbf{X})$ is at most $\|\mathbf{X}\|_F$, we have $\mu_u(\mathcal{A}_k(\mathbf{X}))\leq{}m\nu^2/k$.
\end{proof}
\subsection{Convolution Sampling Set}\label{sec:cnnm:csample}
According to the theories in~\cite{liu:nips:2017,liu:tpami:2021}, it seems ``impossible'' to restore the target $\mathbf{L}_0$ when some of its slices are wholly missing. Nevertheless, as aforementioned, this assertion is made based on the premise that the target $\mathbf{L}_0$ has the lowest Tucker rank among all possible completions. In the light of convolutional low-rankness, recovery with arbitrary sampling pattern is indeed feasible.

First of all, we would like to introduce a concept called \emph{convolution sampling set}:
\begin{defn}[Convolution Sampling Set]\label{def:comg} For $\Omega\subset\{1,\cdots,$ $m_1\}\times\cdots\times\{1,\cdots,m_n\}$ with $m=\Pi_{j=1}^nm_j$, its convolution sampling set associated with kernel size $k_1\times\cdots\times{}k_n$ is denoted by $\Omega_{\mathcal{A}}$ and given by
\begin{align}\label{eq:conv:omega}
\Theta_{\Omega_{\mathcal{A}}} = \mathcal{A}_k(\mathbf{\Theta}_{\Omega}) \quad\textrm{and}\quad \Omega_{\mathcal{A}} = \mathrm{supp} (\Theta_{\Omega_{\mathcal{A}}}),
\end{align}
where $\mathbf{\Theta}_{\Omega}\in\Re^{m_1\times\cdots\times{}m_n}$ and $\Theta_{\Omega_{\mathcal{A}}}\in\mathbb{R}^{m\times{}k}$ are the mask tensor and the mask matrix of $\Omega$ and $\Omega_{\mathcal{A}}$, respectively. Note here that the subscript $k$ is omitted from $\Omega_{\mathcal{A}}$ for the sake of simplicity.
\end{defn}
\begin{figure}[h!]
\begin{center}
\includegraphics[width=0.4\textwidth]{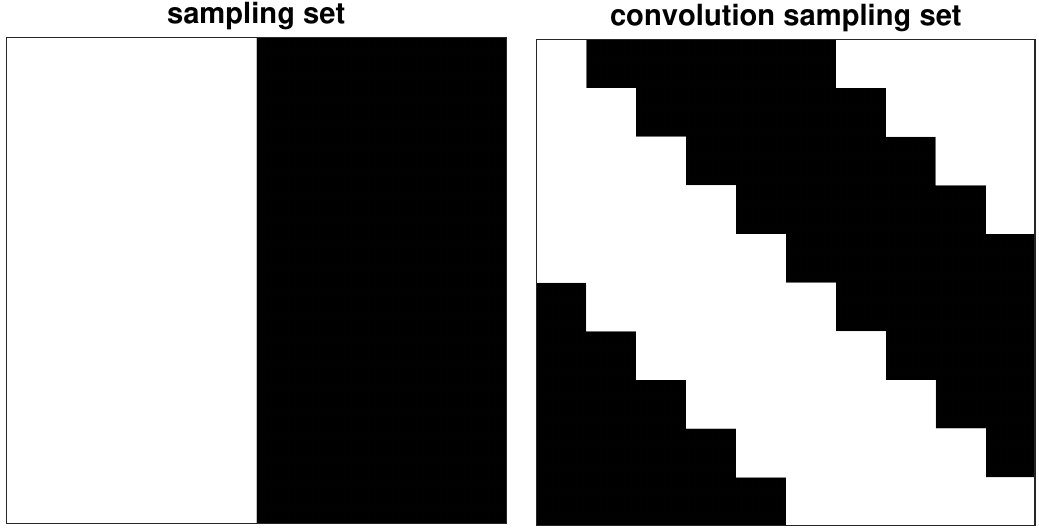}\vspace{-0.1in}
\caption{Illustrating the effects of convolution. Left: the mask $\mathbf{\Theta}_{\Omega}$ of a 2D sampling set $\Omega$ with $m_1=m_2=10$, where the last 5 columns are wholly missing. Right: the mask $\Theta_{\Omega_{\mathcal{A}}}$ of the convolution sampling set $\Omega_{\mathcal{A}}$ with $k_1=k_2=10$.}\label{fig:csample}
\end{center}
\end{figure}

In general, $\Omega_{\mathcal{A}}$ is a convolution counterpart of $\Omega$, and the corresponding orthogonal projection onto $\Omega_{\mathcal{A}}$ is given by $\mathcal{P}_{\Omega_{\mathcal{A}}}(Y) = \Theta_{\Omega_{\mathcal{A}}}\circ{}Y, \forall{}Y\in\mathbb{R}^{m\times{}k}$. \emph{No matter how the observed entries are selected, the convolution sampling set $\Omega_{\mathcal{A}}$ always exhibits a well-posed pattern}. Namely, each column of the mask matrix $\Theta_{\Omega_{\mathcal{A}}}$ has exactly $\rho_0m$ ones and $(1-\rho_0)m$ zeros, and each row of $\Theta_{\Omega_{\mathcal{A}}}$ has at most $(1-\rho_0)m$ zeros. Whenever $k_j=m_j,\forall{}j$, each row of $\Theta_{\Omega_{\mathcal{A}}}$ has also exactly $\rho_0m$ ones and $(1-\rho_0)m$ zeros, as shown in Figure~\ref{fig:csample}.

The following lemma shows some algebraic properties about $\Omega_{\mathcal{A}}$, playing a key role in the proofs.
\begin{lemm}\label{lem:alge:conv}
Let $\Omega\subset\{1,\cdots,m_1\}\times\cdots\times\{1,\cdots,m_n\}$, and let the kernel size used to define $\mathcal{A}_k$ be $k_1\times\cdots\times{}k_n$, $1\leq{}k_j\leq{}m_j, \forall{}1\leq{}j\leq{}n$. Denote $m=\Pi_{j=1}^nm_j$ and $k=\Pi_{j=1}^nk_j$. Let $\Omega_{\mathcal{A}}\subset\{1,\cdots,m\}\times\{1,\cdots,k\}$ be the 2D convolution sampling set defined in~\eqref{eq:conv:omega}. For any $Y\in\mathbb{R}^{m\times{}k}$ and $\mathbf{X}\in\mathbb{R}^{m_1\times\cdots\times{}m_n}$, we have the following:
\begin{align}
&\mathcal{A}_k^*\mathcal{A}_k(\mathbf{X}) = k\mathbf{X},\\\nonumber
&\mathcal{A}_k\mathcal{P}_{\Omega}(\mathbf{X}) = \mathcal{P}_{\Omega_{\mathcal{A}}}\mathcal{A}_k(\mathbf{X}), \\\nonumber
&\mathcal{A}_k^*\mathcal{P}_{\Omega_{\mathcal{A}}}(Y) = \mathcal{P}_{\Omega}\mathcal{A}_k^*(Y),
\end{align}
where $\mathcal{A}_k^*$ is the Hermitian adjoint of $\mathcal{A}_k$.
\end{lemm}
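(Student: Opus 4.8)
The plan is to reduce all three identities to a single structural fact: the convolution matrix $\mathcal{A}_k(X)$ is assembled column-by-column from circular shifts of $X$, and each such shift is merely a rearrangement (permutation) of the entries of $X$. Concretely, I would first make precise the multi-directional shift operator alluded to after~\eqref{eq:conv2mtxproduct}: there is a family of operators $\{\mathcal{S}_t\}_{t=1}^{k}$, indexed by the $k$ positions of the kernel, each a higher-order analogue of the circular shift $\mathcal{S}$ in~\eqref{eq:circshift}, such that the $t$th column of $\mathcal{A}_k(X)$ equals $\mathrm{vec}(\mathcal{S}_t(X))$. The crucial point is that every $\mathcal{S}_t$ is a permutation of the $m$ coordinates and is therefore orthogonal, so $\mathcal{S}_t^*=\mathcal{S}_t^{-1}$ and $\mathcal{S}_t$ commutes entrywise with the Hadamard product. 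Establishing this representation is the only place that requires care; once it is in hand, the three identities fall out.

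For the first identity, I would compute the Gram form of $\mathcal{A}_k$ directly. For any $X,X'$, orthogonality of the shifts gives $\langle\mathcal{A}_k(X),\mathcal{A}_k(X')\rangle=\sum_{t=1}^{k}\langle\mathcal{S}_t(X),\mathcal{S}_t(X')\rangle=\sum_{t=1}^{k}\langle X,X'\rangle=k\langle X,X'\rangle$, where the middle equality uses $\mathcal{S}_t^*\mathcal{S}_t=\mathcal{I}$. Since this holds for all $X'$, we read off $\mathcal{A}_k^*\mathcal{A}_k(X)=kX$. Equivalently, applying $\mathcal{A}_k^*$ un-shifts each of the $k$ columns back to $X$ and sums, again giving $kX$.

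For the second identity, I would argue entrywise. Writing $\mathcal{P}_\Omega(X)=\Theta_\Omega\circ X$ as in~\eqref{eq:omega} and using that the $t$th column of $\mathcal{A}_k$ is the permutation $\mathcal{S}_t$, permutations distribute over the Hadamard product, so $\mathcal{S}_t(\Theta_\Omega\circ X)=\mathcal{S}_t(\Theta_\Omega)\circ\mathcal{S}_t(X)$ for each $t$. Stacking the columns yields $\mathcal{A}_k(\Theta_\Omega\circ X)=\mathcal{A}_k(\Theta_\Omega)\circ\mathcal{A}_k(X)$; invoking $\Theta_{\Omega_{\mathcal{A}}}=\mathcal{A}_k(\Theta_\Omega)$ from~\eqref{eq:conv:omega} together with $\mathcal{P}_{\Omega_{\mathcal{A}}}(Y)=\Theta_{\Omega_{\mathcal{A}}}\circ Y$ then gives $\mathcal{A}_k\mathcal{P}_\Omega(X)=\mathcal{P}_{\Omega_{\mathcal{A}}}\mathcal{A}_k(X)$. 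The third identity I would not prove from scratch: since $\mathcal{P}_\Omega$ and $\mathcal{P}_{\Omega_{\mathcal{A}}}$ are orthogonal projections and hence self-adjoint, taking the Hermitian adjoint of the operator identity $\mathcal{A}_k\mathcal{P}_\Omega=\mathcal{P}_{\Omega_{\mathcal{A}}}\mathcal{A}_k$ immediately gives $\mathcal{P}_\Omega\mathcal{A}_k^*=\mathcal{A}_k^*\mathcal{P}_{\Omega_{\mathcal{A}}}$, which is exactly the claim.

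The main obstacle is purely bookkeeping rather than conceptual: defining the multi-directional circular shift for order-$n$ tensors and verifying that column $t$ of $\mathcal{A}_k(X)$ is genuinely a coordinate permutation of $X$, indexed consistently for every tensor (including the mask $\Theta_\Omega$). Everything downstream---the $k\mathcal{I}$ normalization, the commutation with $\mathcal{P}_\Omega$, and the adjoint relation---follows from orthogonality of those permutations and self-adjointness of the projections, with no further estimates required.
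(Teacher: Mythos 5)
Your proposal is correct, and its backbone is the same as the paper's: the paper likewise makes the column representation precise by defining a multi-directional circular shift $\mathcal{T}_{(i_1,\cdots,i_n)}$ (built recursively from Matlab-style \texttt{circshift} along each mode, with the column index $j=1+\sum_{a=1}^{n}(i_a-1)\Pi_{b=0}^{a-1}k_b$), so that $[\mathcal{A}_k(X)]_{:,j}=\mathcal{T}_{(i_1,\cdots,i_n)}(X)$, and it proves the second identity exactly as you do, by distributing the shift over the Hadamard product. Where you diverge is in the first and third claims. For the first, the paper writes down the explicit adjoint $\mathcal{A}_k^*(Z)=\sum_{i_1,\cdots,i_n}\mathcal{T}_{(i_1,\cdots,i_n)}^{-1}([Z]_{:,j})$ and gets $\mathcal{A}_k^*\mathcal{A}_k(X)=\sum\mathcal{T}^{-1}\mathcal{T}(X)=kX$ directly; your Gram computation $\langle\mathcal{A}_k(X),\mathcal{A}_k(X')\rangle=k\langle X,X'\rangle$ is an equivalent variant (and you even note the paper's un-shift-and-sum argument as an alternative). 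For the third identity, the paper repeats the direct computation: it expands $\mathcal{A}_k^*\mathcal{P}_{\Omega_{\mathcal{A}}}(Y)$ column by column, uses $\mathcal{T}^{-1}([\mathcal{A}_k(\Theta_\Omega)]_{:,j})=\Theta_\Omega$, and re-assembles $\mathcal{P}_{\Omega}\mathcal{A}_k^*(Y)$. Your route---taking the Hermitian adjoint of the operator identity $\mathcal{A}_k\mathcal{P}_\Omega=\mathcal{P}_{\Omega_{\mathcal{A}}}\mathcal{A}_k$ and using self-adjointness of the two orthogonal projections---is cleaner and strictly shorter: it makes the third claim a formal corollary of the second rather than a separate computation, at the small cost of having to invoke the abstract adjoint calculus instead of staying entirely inside the explicit shift bookkeeping. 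Both are fully rigorous; the paper's version has the virtue that the explicit formula for $\mathcal{A}_k^*$ it develops is reused elsewhere (e.g., in the ADMM update for $L$), whereas your duality argument never needs that formula for the third claim at all.
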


\subsection{Isomerism and Relative Well-Conditionedness}
By the definition of convolution matrix, recovering $\mathbf{L}_0$ can ensure to recover its convolution matrix $\mathcal{A}_k(\mathbf{L}_0)$. On the other hand, Lemma~\ref{lem:alge:conv} implies that obtaining $\mathcal{A}_k(\mathbf{L}_0)$ also suffices to identify $\mathbf{L}_0$. So, Problem~\ref{pb:tc} can be equivalently converted into a standard matrix completion problem:
\begin{prob}[Dual Problem]\label{pb:dual}
Use the same notations as in Problem~\ref{pb:tc}. Denote by $\Omega_{\mathcal{A}}$ the convolution sampling set of $\Omega$. Given $\mathcal{P}_{\Omega_{\mathcal{A}}}(\mathcal{A}_k(\mathbf{L}_0))$, the goal is to recover $\mathcal{A}_k(\mathbf{L}_0)$.
\end{prob}

As aforementioned, the pattern of $\Omega_{\mathcal{A}}$ is always well-posed, in a sense that some observations are available at every column and row of the matrix.\footnote{To meet this, the kernel size needs be chosen properly. For example, under the setup of Problem~\ref{pb:mtsf}, $k_n$ should be greater than $h$.} Hence, provided that $\mathbf{L}_0$ is convolutionally low-rank, i.e., $\mathcal{A}_k(\mathbf{L}_0)$ is low-rank, Problem~\ref{pb:dual} is exactly the LRMC problem widely studied in the literature~\cite{Candes:2009:math,gross:tit:2011,ge:nips:2016,liu:tsp:2016,yoon2018icml}. However, unlike the setting of random sampling adopted by most studies, the sampling regime here is \emph{deterministic} rather than random, thereby we have to count on the techniques established by~\cite{liu:nips:2017,liu:tpami:2021}. For the completeness of presentation, we would briefly introduce the concepts of \emph{isomeric condition} (or \emph{isomerism})~\cite{liu:nips:2017} and \emph{relative well-conditionedness}~\cite{liu:tpami:2021}.
\begin{defn}[$\bar{\Omega}/\bar{\Omega}^T$-Isomeric~\cite{liu:nips:2017}]\label{def:iso:omg}
Let $X\in\Re^{a\times{}b}$ be a matrix and $\bar{\Omega}\subset\{1,\cdots,a\}\times\{1,\cdots,b\}$ be a 2D sampling set. Suppose that $\bar{\Omega}_i\neq\emptyset$ (empty set) and $\bar{\Omega}^j\neq\emptyset$, $\forall{}i,j$. Then $X$ is $\bar{\Omega}$-isomeric iff
\begin{align}
&\rank{[X]_{\bar{\Omega}^j,:}} = \rank{X}, \forall{}j = 1,\cdots,b.
\end{align}
Furthermore, the matrix $X$ is called $\bar{\Omega}/\bar{\Omega}^T$-isomeric iff $X$ is $\bar{\Omega}$-isomeric and $X^T$ is $\bar{\Omega}^T$-isomeric.
\end{defn}
\begin{defn}[$\bar{\Omega}/\bar{\Omega}^T$-Relative Condition Number~\cite{liu:tpami:2021}]\label{def:rcn}
Use the same notations as in Definition~\ref{def:iso:omg}. Suppose that $[X]_{\bar{\Omega}^j,:}\neq0$ and $[X]_{:,\bar{\Omega}_i}\neq0$, $\forall{}i,j$. Then the $\bar{\Omega}$-relative condition number of $X$ is denoted by $\gamma_{\bar{\Omega}}(X)$ and given by
\begin{align}
\gamma_{\bar{\Omega}}(X) = \min_{1\leq{}j\leq{}b}1/\|X([X]_{\bar{\Omega}^j,:})^+\|^2,
\end{align}
where $(\cdot)^+$ is the Moore-Penrose pseudo-inverse of a matrix. Furthermore, the $\bar{\Omega}/\bar{\Omega}^T$-relative condition number of $X$ is denoted by $\gamma_{\bar{\Omega},\bar{\Omega}^T}(X)$ and given by $\gamma_{\bar{\Omega},\bar{\Omega}^T}(X) = \min(\gamma_{\bar{\Omega}}(X), \gamma_{\bar{\Omega}^T}(X^T))$.
\end{defn}

In order to show that CNNM succeeds in recovering $\mathbf{L}_0$ even when the observed entries are arbitrarily placed, we just need to prove that $\mathcal{A}_k(\mathbf{L}_0)$ is $\Omega_\mathcal{A}/\Omega_\mathcal{A}^T$-isomeric and $\gamma_{\Omega_\mathcal{A},\Omega_\mathcal{A}^T}(\mathcal{A}_k(\mathbf{L}_0))$ is reasonably large as well. To do this, the following lemma is useful.
\begin{lemm}[\cite{liu:tpami:2021}]\label{lem:iso:rcn}
Use the same notations as in Definition~\ref{def:iso:omg}. Let $\mu_X=\max(\mu_u(X), \mu_v(X))$ be the coherence of the matrix $X$, and let $r_X$ be the rank of $X$. Define a quantity $\rho$ as
\begin{align}
\rho = \min(\min_{1\leq{}i\leq{}a}\mathrm{card}(\bar{\Omega}_i)/b, \min_{1\leq{}j\leq{}b}\mathrm{card}(\bar{\Omega}^j)/a).
\end{align}
For any $0\leq\alpha<1$, if $\rho>1-(1-\alpha)/(\mu_Xr_X)$ then $X$ is $\bar{\Omega}/\bar{\Omega}^T$-isomeric and $\gamma_{\bar{\Omega},\bar{\Omega}^T}(X)>\alpha$.
\end{lemm}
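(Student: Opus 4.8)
The plan is to pass to the skinny SVD of $X$ and reduce both conclusions to a single deterministic spectral inequality for the sub-sampled singular vectors. Write $X = U\Sigma{}V^T$ with $U\in\Re^{\bar{m}_1\times{}r_X}$ and $V\in\Re^{\bar{m}_2\times{}r_X}$ having orthonormal columns and $\Sigma$ an invertible $r_X\times{}r_X$ diagonal matrix. For a fixed column index $j$ set $U_j := [U]_{\bar{\Omega}^j,:}$. Since $\Sigma{}V^T$ has full row rank $r_X$ and $\Sigma$ is invertible, $[X]_{\bar{\Omega}^j,:} = U_j\Sigma{}V^T$ and hence $\rank{[X]_{\bar{\Omega}^j,:}} = \rank{U_j}$. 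Thus $\bar{\Omega}$-isomerism is equivalent to $U_j$ having full column rank for every $j$, which is in turn controlled by $\lambda_{\min}(U_j^TU_j)$. The whole proof therefore reduces to lower-bounding this smallest eigenvalue.

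Second, I would establish that spectral inequality. Because $U$ has orthonormal columns, $U^TU = \Id$, and splitting the defining sum over the sampled and unsampled rows gives
\begin{align*}
U_j^TU_j = \Id - \sum_{i\notin\bar{\Omega}^j}[U]_{i,:}^T[U]_{i,:}.
\end{align*}
Each rank-one term has operator norm $\norm{[U]_{i,:}}_F^2$, which the coherence hypothesis bounds by $\mu_Xr_X/\bar{m}_1$, and the number of unsampled rows is $\bar{m}_1-\mathrm{card}(\bar{\Omega}^j)\leq(1-\rho)\bar{m}_1$ by the definition of $\rho$. The triangle inequality then yields $\norm{\sum_{i\notin\bar{\Omega}^j}[U]_{i,:}^T[U]_{i,:}}\leq(1-\rho)\mu_Xr_X$, so that
\begin{align*}
U_j^TU_j \succeq (1-(1-\rho)\mu_Xr_X)\Id \succ \alpha\Id,
\end{align*}
where the strict inequality uses $\rho>1-(1-\alpha)/(\mu_Xr_X)$. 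In particular $U_j^TU_j\succ0$, giving $\rank{[X]_{\bar{\Omega}^j,:}}=r_X$ for all $j$, i.e.\ $X$ is $\bar{\Omega}$-isomeric. Running the identical argument on $V$ (that is, on $X^T$, whose relevant column sets $(\bar{\Omega}^T)^j$ are exactly the row sets $\bar{\Omega}_j$ of $\bar{\Omega}$, with normalization $\mu_Xr_X/\bar{m}_2$) shows $X^T$ is $\bar{\Omega}^T$-isomeric, so $X$ is $\bar{\Omega}/\bar{\Omega}^T$-isomeric.

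Third, I would convert the same bound into the relative-condition-number estimate. Using the full-rank factorization $[X]_{\bar{\Omega}^j,:} = (U_j\Sigma)V^T$, in which $U_j\Sigma$ has full column rank and $V^T$ has full row rank, one computes $([X]_{\bar{\Omega}^j,:})^+ = V\Sigma^{-1}(U_j^TU_j)^{-1}U_j^T$, and then $V^TV=\Id$ collapses the product to
\begin{align*}
X([X]_{\bar{\Omega}^j,:})^+ = U(U_j^TU_j)^{-1}U_j^T.
\end{align*}
Since $U$ is an isometry on its column space, $\norm{X([X]_{\bar{\Omega}^j,:})^+}^2 = \norm{(U_j^TU_j)^{-1}U_j^T}^2 = 1/\lambda_{\min}(U_j^TU_j)$. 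Combining with the previous paragraph, $1/\norm{X([X]_{\bar{\Omega}^j,:})^+}^2 = \lambda_{\min}(U_j^TU_j) > \alpha$ for every $j$, so $\gamma_{\bar{\Omega}}(X)>\alpha$; the symmetric computation gives $\gamma_{\bar{\Omega}^T}(X^T)>\alpha$, and therefore $\gamma_{\bar{\Omega},\bar{\Omega}^T}(X) = \min(\gamma_{\bar{\Omega}}(X),\gamma_{\bar{\Omega}^T}(X^T)) > \alpha$.

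The step I expect to be most delicate is the pseudo-inverse simplification in the third paragraph: the identity $([X]_{\bar{\Omega}^j,:})^+ = V\Sigma^{-1}(U_j^TU_j)^{-1}U_j^T$ relies on the reverse-order law $(BC)^+=C^+B^+$, which is valid here only because $U_j\Sigma$ has full column rank and $V^T$ has full row rank---so it silently depends on the isomerism already secured in the second paragraph. The spectral inequality itself is conceptually the heart of the argument, yet under the coherence hypothesis it reduces to a one-line triangle-inequality estimate; the remaining care is purely bookkeeping, namely matching the paper's coherence normalization so that $\max_i\norm{[U]_{i,:}}_F^2\leq\mu_Xr_X/\bar{m}_1$ and $\max_j\norm{[V]_{j,:}}_F^2\leq\mu_Xr_X/\bar{m}_2$ hold simultaneously.
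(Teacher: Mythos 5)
The paper never proves this lemma: it is imported verbatim from \cite{liu:tpami:2021} (note the citation in the lemma header), and Section~\ref{sec:proof} only proves Lemma~\ref{lem:alge:conv} and the two theorems, invoking Lemma~\ref{lem:iso:rcn} as a black box. So there is no in-paper proof to compare against; what you have written is a self-contained replacement, and it is correct. Your reduction $\rank{[X]_{\bar{\Omega}^j,:}}=\rank{[U]_{\bar{\Omega}^j,:}}$, the splitting $U_j^TU_j=\Id-\sum_{i\notin\bar{\Omega}^j}[U]_{i,:}^T[U]_{i,:}$ with the triangle-inequality bound $(1-\rho)\mu_Xr_X<1-\alpha$, and the collapse $X([X]_{\bar{\Omega}^j,:})^+=U(U_j^TU_j)^{-1}U_j^T$ giving $1/\norm{X([X]_{\bar{\Omega}^j,:})^+}^2=\lambda_{\min}(U_j^TU_j)>\alpha$ are all sound, and the argument is not circular: the reverse-order law for the pseudo-inverse is applied only after positive definiteness of $U_j^TU_j$ has been secured. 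Two small points are worth making explicit. First, the nonemptiness hypotheses $\bar{\Omega}_i\neq\emptyset$, $\bar{\Omega}^j\neq\emptyset$ of Definition~\ref{def:iso:omg} are implied by your bound, since $\mu_Xr_X\geq1$ forces $\rho>\alpha\geq0$. Second, on the normalization caveat you flag at the end: the lemma as used in this paper is applied to $\mathcal{A}_k(L_0)\in\Re^{m\times k}$ with the convolution coherence of Section~\ref{sec:note_pre}, which scales \emph{both} $\max_i\norm{[U]_{i,:}}_F^2$ and $\max_j\norm{[V]_{j,:}}_F^2$ by $m/r$ rather than by the respective row counts; since $m\geq k$, this gives $\max_j\norm{[V]_{j,:}}_F^2\leq\mu r/m\leq\mu r/k$, so your $V$-side estimate only gets stronger and the proof goes through under either convention.
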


\section{Analysis and Algorithms}\label{sec:cnnm}
In this section, we will provide theoretical analysis to validate the recovery ability of CNNM (and $\mathrm{DFT}_{\ell_1}$), uncovering the mystery on why CNNM can work with arbitrary sampling patterns.

\subsection{Main Results}\label{sec:main:result}
First consider the ideal case where the observed data is precise and noiseless. In this case, the CNNM program~\eqref{eq:tc:cnnm:exact} guarantees to exactly recover the target $\mathbf{L}_0$ under a certain sampling condition, as shown in the following theorem.
\begin{theo}[Noiseless]\label{main:thm:cnnm:noiseless}
Let $\mathbf{L}_0\in\mathbb{R}^{m_1\times{}\cdots\times{}m_n}$ and $\Omega\subset\{1,\cdots,m_1\}\times\cdots\times\{1,\cdots,m_n\}$. Let the adopted kernel size be $k_1\times{}\cdots\times{}k_n$ with $k_j\leq{}m_j,\forall{}1\leq{}j\leq{}n$. Denote $k=\Pi_{j=1}^nk_j$, $m=\Pi_{j=1}^nm_j$ and $\rho_0=\mathrm{card}(\Omega)/m$. Denote by $r_k(\mathbf{L}_0)$ and $\mu_k(\mathbf{L}_0)$ the convolution rank and convolution coherence of the target $\mathbf{L}_0$, respectively. Then $\mathbf{L}=\mathbf{L}_0$ is the unique minimizer to the CNNM program~\eqref{eq:tc:cnnm:exact}, as long as
\begin{align}
\rho_0 > 1 - \frac{0.25k}{\mu_k(\mathbf{L}_0)r_k(\mathbf{L}_0)m}.
\end{align}
\end{theo}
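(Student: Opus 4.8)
The plan is to convert CNNM into a standard low-rank matrix completion problem for the convolution matrix, and then invoke the deterministic recovery theory based on isomerism and relative well-conditionedness. Write $M_0=\mathcal{A}_k(L_0)\in\Re^{m\times k}$. By the equivalence~\eqref{eq:tc:cnnm:eqiv}, CNNM minimizes $\norm{\mathcal{A}_k(L)}_{*}$ subject to $\mathcal{P}_\Omega(L-L_0)=0$. The identity $\mathcal{A}_k^*\mathcal{A}_k=k\mathcal{I}$ in Lemma~\ref{lem:alge:conv} shows that $\mathcal{A}_k$ is injective (indeed $\tfrac{1}{\sqrt{k}}\mathcal{A}_k$ is an isometry), so together with the commutation relation $\mathcal{A}_k\mathcal{P}_\Omega=\mathcal{P}_{\Omega_{\mathcal{A}}}\mathcal{A}_k$ the feasibility constraint is equivalent to $\mathcal{P}_{\Omega_{\mathcal{A}}}(\mathcal{A}_k(L)-M_0)=0$. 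For any feasible $L=L_0+\Delta$ (so $\mathcal{P}_\Omega(\Delta)=0$), the matrix perturbation $D=\mathcal{A}_k(\Delta)$ is supported on $\Omega_{\mathcal{A}}^{\bot}$ and is nonzero whenever $\Delta\neq0$. Hence CNNM is exactly the program $\min_M\norm{M}_{*}$ s.t. $\mathcal{P}_{\Omega_{\mathcal{A}}}(M-M_0)=0$ with $M$ further restricted to the range of $\mathcal{A}_k$. This is the matrix-completion program of Problem~\ref{pb:dual}, and the crucial observation is that the convolution structure only shrinks the set of admissible perturbations; therefore, once I establish that $M_0$ is the unique minimizer of the unrestricted program (over all $M\in\Re^{m\times k}$), strict optimality of $M_0$ against every $D$ on $\Omega_{\mathcal{A}}^{\bot}$ holds a fortiori against those $D$ lying in the range of $\mathcal{A}_k$, and injectivity transfers the uniqueness back to $L=L_0$.

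It thus remains to certify that $M_0$ satisfies the hypotheses of the deterministic matrix-completion theorem of~\cite{liu:tpami:2021}: that $M_0$ is $\Omega_{\mathcal{A}}/\Omega_{\mathcal{A}}^T$-isomeric (Definition~\ref{def:iso:omg}) with a sufficiently large relative condition number $\gamma_{\Omega_{\mathcal{A}},\Omega_{\mathcal{A}}^T}(M_0)$ (Definition~\ref{def:rcn}). These are precisely the conclusions of Lemma~\ref{lem:iso:rcn}, provided the sampling density $\rho$ of $\Omega_{\mathcal{A}}$ is large enough. Here the rigidity of the convolution sampling set carries the argument: since each column of $\Theta_{\Omega_{\mathcal{A}}}=\mathcal{A}_k(\Theta_\Omega)$ is a circular shift of $\Theta_\Omega$, every column carries exactly $\mathrm{card}(\Omega)=\rho_0m$ ones, while every row misses at most $(1-\rho_0)m$ entries and so carries at least $k-(1-\rho_0)m$ ones. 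Consequently the density parameter of Lemma~\ref{lem:iso:rcn} obeys
\begin{align*}
\rho=\min\Big(\min_{i}\tfrac{\mathrm{card}((\Omega_{\mathcal{A}})_i)}{k},\ \min_{j}\tfrac{\mathrm{card}((\Omega_{\mathcal{A}})^j)}{m}\Big)\geq 1-(1-\rho_0)\tfrac{m}{k},
\end{align*}
where the row term is binding because $m\geq k$; note also that the stated sampling condition forces $\rho_0>1-k/m$, so every row of $\Omega_{\mathcal{A}}$ is nonempty as required by Definition~\ref{def:iso:omg}.

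Finally I would tune the free parameter in Lemma~\ref{lem:iso:rcn}. Taking $\alpha$ to be the threshold on $\gamma$ demanded by the matrix-completion theorem---concretely $\alpha=3/4$, which yields the factor $1-\alpha=0.25$---the hypothesis $\rho>1-(1-\alpha)/(\mu_k(L_0)r_k(L_0))$ follows from the assumption $\rho_0>1-0.25k/(\mu_k(L_0)r_k(L_0)m)$: the latter gives $(1-\rho_0)m/k<0.25/(\mu_k(L_0)r_k(L_0))$, whence $\rho\geq1-(1-\rho_0)m/k>1-0.25/(\mu_k(L_0)r_k(L_0))$. Lemma~\ref{lem:iso:rcn} then certifies that $M_0$ is $\Omega_{\mathcal{A}}/\Omega_{\mathcal{A}}^T$-isomeric with $\gamma_{\Omega_{\mathcal{A}},\Omega_{\mathcal{A}}^T}(M_0)>3/4$, and the matrix-completion theorem of~\cite{liu:tpami:2021} delivers the uniqueness of $M_0$, completing the proof via the reduction above.

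The step I expect to be the main obstacle is the genuine isomerism of $M_0$ rather than mere sampling density: isomerism asks that restricting $M_0$ to its observed rows preserves rank, which can fail when the convolution coherence $\mu_k(L_0)$ is large, and it is exactly this coupling between coherence and density that Lemma~\ref{lem:iso:rcn} must absorb and that dictates the $\mu_k(L_0)r_k(L_0)$ factor in the final bound. A secondary delicate point is confirming that the convolution structure of the admissible perturbations $D=\mathcal{A}_k(\Delta)$ does not interfere with the dual-certificate argument underpinning~\cite{liu:tpami:2021}; this is resolved cleanly by the observation that the structural constraint only removes candidate perturbations and so cannot destroy the strict optimality of $M_0$.
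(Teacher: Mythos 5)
Your proposal is correct and follows essentially the same route as the paper: both reduce CNNM to completion of the convolution matrix $\mathcal{A}_k(L_0)$ under the sampling set $\Omega_{\mathcal{A}}$, bound its worst row/column density by $\rho\geq 1-(1-\rho_0)m/k$, and apply Lemma~\ref{lem:iso:rcn} with $\alpha=3/4$ to conclude that $\mathcal{A}_k(L_0)$ is $\Omega_{\mathcal{A}}/\Omega_{\mathcal{A}}^T$-isomeric with $\gamma_{\Omega_{\mathcal{A}},\Omega_{\mathcal{A}}^T}(\mathcal{A}_k(L_0))>0.75$, which is exactly what drives the dual-certificate machinery of~\cite{liu:tpami:2021}. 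The only difference is packaging: where you invoke the deterministic matrix-completion theorem of~\cite{liu:tpami:2021} as a black box and transfer uniqueness back to $L_0$ via your (valid) a-fortiori-plus-injectivity reduction, the paper re-derives the certificate explicitly---its Lemma~\ref{lem:dual} states the dual conditions tailored to CNNM, and Lemmas 5.6, 5.11 and 5.12 of~\cite{liu:tpami:2021} are used to construct $Y$ and bound $\|\mathcal{P}_{T_0}^\bot\mathcal{P}_{\Omega_{\mathcal{A}}}(Y)\|<1$---so the two arguments coincide in substance.
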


The above theorem illustrates that, to maximize the recovery ability of CNNM, the kernel size $k$ should be chosen to minimize the sampling bound $1 - 0.25k/(\mu_k(\mathbf{L}_0)$ $r_k(\mathbf{L}_0)m)$, which suggests to minimize $r_k(\mathbf{L}_0)/k$. Note here that $r_k(\mathbf{L}_0)\leq{}k$ and $r_k(\mathbf{L}_0)$ is a non-decreasing function of $k$, thereby $r_k(\mathbf{L}_0)/k$ could be a U-shaped function of $k$. This is a useful clue for determining the key parameters $\{k_1,\cdots,k_n\}$ in realistic environments, as we will elaborate in Section~\ref{sc:main:diss}. By setting the kernel to have the same size with the target $\mathbf{L}_0$, CNNM falls back to $\mathrm{DFT}_{\ell_1}$. Thus, the following is an immediate consequence of Theorem~\ref{main:thm:cnnm:noiseless}.
\begin{coro}[Noiseless]\label{main:coro:dft:noiseless}
Use the same notations as in Theorem~\ref{main:thm:cnnm:noiseless}, and set $k_j=m_j,\forall{}1\leq{}j\leq{}n$. Then $\mathbf{L}=\mathbf{L}_0$ is the unique minimizer to the $\mathrm{DFT}_{\ell_1}$ program~\eqref{eq:tc:dftl1:exact}, as long as
\begin{align}
\rho_0 > 1 - \frac{0.25}{\mu_m(\mathbf{L}_0)\|\mathcal{F}(\mathbf{L}_0)\|_0}.
\end{align}
\end{coro}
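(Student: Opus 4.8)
\emph{Proof proposal.} The plan is to obtain Corollary~\ref{main:coro:dft:noiseless} as a direct specialization of Theorem~\ref{main:thm:cnnm:noiseless}, which has already been established and is available for use. First I would set $k_j=m_j$ for all $1\leq{}j\leq{}n$, so that $k=\Pi_{j=1}^nk_j=\Pi_{j=1}^nm_j=m$. With this choice $k/m=1$, and the theorem's threshold $\rho_0>1-0.25k/(\mu_k(L_0)r_k(L_0)m)$ becomes $\rho_0>1-0.25/(\mu_m(L_0)r_m(L_0))$. It then only remains to reconcile the objective, the coherence, and the rank with their Fourier counterparts.

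Next I would invoke the DFT--convolution correspondence recorded earlier: at $k_j=m_j$ the convolution matrix $\mathcal{A}_m(L_0)$ is diagonalized by the Kronecker Fourier matrix $U=U_1\otimes\cdots\otimes{}U_n$, so its singular values (the convolution eigenvalues) are precisely the magnitudes of the entries of $\mathcal{F}(L_0)$. Hence $r_m(L_0)=\rank{\mathcal{A}_m(L_0)}=\|\mathcal{F}(L_0)\|_0$ and $\|\mathcal{A}_m(L_0)\|_*=\|\mathcal{F}(L_0)\|_1$. The latter identity shows that, under $k=m$, the CNNM objective $\|\mathcal{A}_k(L)\|_*$ equals the $\mathrm{DFT}_{\ell_1}$ objective $\|\mathcal{F}(L)\|_1$; since both programs carry the same linear constraint $\mathcal{P}_{\Omega}(L-L_0)=0$, program~\eqref{eq:tc:cnnm:eqiv} at $k=m$ is literally program~\eqref{eq:tc:dftl1:eqiv}. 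Substituting $r_m(L_0)=\|\mathcal{F}(L_0)\|_0$ into the reduced threshold yields exactly $\rho_0>1-0.25/(\mu_m(L_0)\|\mathcal{F}(L_0)\|_0)$, and the uniqueness of the minimizer $L=L_0$ transfers verbatim.

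The verifications left are bookkeeping rather than analysis. I would confirm that $\mu_m(L_0)$ in the corollary is exactly the convolution coherence specialized to $k=m$ (which it is by definition, being the coherence of $\mathcal{A}_m(L_0)$), and that the well-posedness of the convolution sampling set on which the general theorem leans is not degraded at $k=m$. In fact, at $k=m$ the set $\Omega_{\mathcal{A}}$ becomes maximally balanced---each row and each column of its mask matrix holds exactly $\rho_0m$ ones---so no new degeneracy appears. Consequently the corollary itself presents no real obstacle; all its difficulty is inherited from Theorem~\ref{main:thm:cnnm:noiseless}.

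For completeness I record where that inherited difficulty lies, since it is the only genuine obstacle in the chain. Theorem~\ref{main:thm:cnnm:noiseless} would be proved by reduction to the dual matrix-completion task of Problem~\ref{pb:dual}: by the identities $\mathcal{A}_k^*\mathcal{A}_k=k\mathcal{I}$ and $\mathcal{A}_k\mathcal{P}_{\Omega}=\mathcal{P}_{\Omega_{\mathcal{A}}}\mathcal{A}_k$ of Lemma~\ref{lem:alge:conv}, recovering $L_0$ is equivalent to completing $\mathcal{A}_k(L_0)$ from its samples on $\Omega_{\mathcal{A}}$ by nuclear-norm minimization. Applying Lemma~\ref{lem:iso:rcn} with $\alpha=0.75$ to the balanced set $\Omega_{\mathcal{A}}$---whose binding sampling fraction is $\rho=1-(1-\rho_0)m/k$ because $m\geq{}k$---certifies that $\mathcal{A}_k(L_0)$ is $\Omega_{\mathcal{A}}/\Omega_{\mathcal{A}}^T$-isomeric with relative condition number above $0.75$ exactly when $\rho_0>1-0.25k/(\mu_k(L_0)r_k(L_0)m)$. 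The hard part, which the corollary never has to confront, is that the optimization variable is constrained to the image of $\mathcal{A}_k$ rather than ranging over all matrices, so the deterministic exact-recovery machinery of~\cite{liu:tpami:2021} must be adapted to this structured completion rather than applied to free low-rank matrix completion.
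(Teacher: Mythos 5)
Your proposal is correct and takes essentially the same route as the paper: the paper likewise obtains this corollary as an immediate specialization of Theorem~\ref{main:thm:cnnm:noiseless} at $k_j=m_j$ (so $k=m$), using the Fourier diagonalization of $\mathcal{A}_m(L_0)$ to identify $r_m(L_0)=\|\mathcal{F}(L_0)\|_0$ and $\|\mathcal{A}_m(L)\|_*=\|\mathcal{F}(L)\|_1$, which makes program~\eqref{eq:tc:cnnm:eqiv} coincide with program~\eqref{eq:tc:dftl1:eqiv} and turns the threshold into $1-0.25/(\mu_m(L_0)\|\mathcal{F}(L_0)\|_0)$. Your closing sketch of how Theorem~\ref{main:thm:cnnm:noiseless} itself is proved (isomerism of $\Omega_{\mathcal{A}}$ via Lemma~\ref{lem:iso:rcn} with $\alpha=0.75$, then a dual certificate) also matches the paper, but it is not needed for the corollary.
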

\begin{figure}[h!]
\begin{center}
\includegraphics[width=0.48\textwidth]{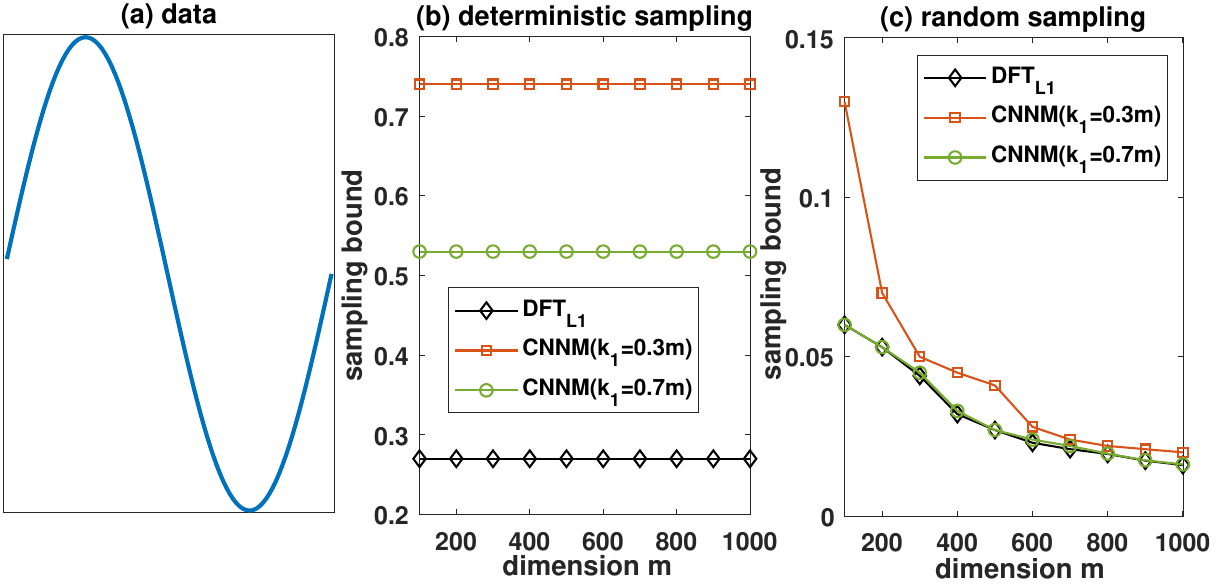}\vspace{-0.15in}
\caption{Investigating the difference between future data and randomly chosen missing entries. (a) The sine sequence used for experiments: $\{\mathbf{M}_t\}_{t=1}^{m}$ with $\mathbf{M}_t = \sin(2t\pi/m)$, thereby $\mathbf{L}_0$ is an $m$-dimensional vector with $r_k(\mathbf{L}_0)=2$ and $\mu_k(\mathbf{L}_0)=1, \forall{}m\geq{}k>2$. (b) The sampling bound under the setup of forecasting, where $\{\mathbf{M}_t\}_{t=\rho_0m+1}^{m}$ is the missing data. (c) The sampling bound under the context of random sampling. In these experiments, the sampling bound is calculated as the smallest fraction of observed entries for the methods to succeed in recovering $\mathbf{L}_0$, in a sense that the recovery accuracy measured by Peak Signal-to-Noise Ratio (PSNR) is greater than 50.}\label{fig:sc}
\end{center}
\end{figure}

As aforementioned, the kernel size should be made positively proportional to the tensor size, i.e., $k=\mathcal{O}(m)$. Thus, the above theorems suggest that the success of CNNM (and $\mathrm{DFT}_{\ell_1}$) requires $\rho_0 > 1 - \mathcal{O}(1/r_k(\mathbf{L}_0))$, which says that the sampling bound, the lower bound of $\rho_0$, has no direct link to the tensor dimension $m$. This is quite unlike the random sampling based matrix completion theories (e.g.,~\cite{Candes:2009:math,Chen:2015:tit}). To be more precise, consider the case of $k=m$ for simplicity. According to~\cite{Chen:2015:tit}, the sampling complexity required for recovering the convolution matrix, $\mathcal{A}_m(\mathbf{L}_0)\in\mathbb{R}^{m\times{}m}$, can be as low as $\mathcal{O}(r_m(\mathbf{L}_0)(\log{m})^2/m)$, which gives that the sampling bound should tend to decrease as $m$ grows. In fact, there is no conflict because the sampling regime under forecasting is deterministic rather than random. Figure~\ref{fig:sc} illustrates that the results derived from random sampling cannot apply to forecasting, confirming the certainty of our result. Even more, the sampling bound $1 - \mathcal{O}(1/r_k(\mathbf{L}_0))$ is pretty tight under the setup of forecasting, as we will show in Section~\ref{sec:exp:simu}.

In practice, the observed data is often a noisy version of $\mathcal{P}_{\Omega}(\mathbf{L}_0)$---or the target $\mathbf{L}_0$ is not strictly convolutionally low-rank as equal. In this case, one should relax the equality constraint and consider instead the following:
\begin{align}\label{eq:cnnm:noisy}
\min_{\mathbf{L}} \norm{\mathcal{A}_k(\mathbf{L})}_{*},\textrm{ s.t. }\|\mathcal{P}_{\Omega}(\mathbf{L} - \mathbf{M})\|_F\leq{}\epsilon,
\end{align}
where $\mathcal{P}_{\Omega}(\mathbf{M})$ denotes an observation of $\mathcal{P}_{\Omega}(\mathbf{L}_0)$, and $\epsilon\geq0$ is a parameter. The following theorem guarantees the recovery accuracy of~\eqref{eq:cnnm:noisy}.

\begin{theo}[Noisy]\label{main:thm:cnnm:noisy}
Use the same notations as in Theorem~\ref{main:thm:cnnm:noiseless}. Suppose that $\|\mathcal{P}_{\Omega}(\mathbf{M} - \mathbf{L}_0)\|_F\leq\epsilon$. If
\begin{align}\label{eq:samplebound1}
\rho_0 > 1 - \frac{0.22k}{\mu_k(\mathbf{L}_0)r_k(\mathbf{L}_0)m},
\end{align}
then any optimal solution $\mathbf{L}_o$ to the CNNM program~\eqref{eq:cnnm:noisy} gives a near recovery to the target tensor $\mathbf{L}_0$, in a sense that
\begin{align}\label{eq:errbound1}
\|\mathbf{L}_o-\mathbf{L}_0\|_F\leq{}(1+\sqrt{2})(38\sqrt{k}+2)\epsilon.
\end{align}
\end{theo}
Due to the error bound in~\eqref{eq:errbound1}, it seems that the recovery error produced by CNNM may increase with $\sqrt{k}$. Similar phenomena appear in many papers on matrix completion, e.g.,~\cite{CandesPIEEE}. This, however, is unlikely to be optimal and is indeed an unpleasant effect of the proof techniques commonly used in the community. As a result, it would be incorrect to speculate, based on the error bound in~\eqref{eq:errbound1}, that smaller kernel size $k$ always results in less recovery error. To boost the recovery performance of CNNM, in fact, $k$ should be chosen to minimize the sampling bound, as we have pointed out below Theorem~\ref{main:thm:cnnm:noiseless}. Once again, the relationship between CNNM and $\mathrm{DFT}_{\ell_1}$ leads to the following result:
\begin{coro}[Noisy]\label{main:coro:dft:noisy}
Use the same notations as in Theorem~\ref{main:thm:cnnm:noiseless}, and set $k_j=m_j,\forall{}j$. Suppose that $\|\mathcal{P}_{\Omega}(\mathbf{M} - \mathbf{L}_0)\|_F\leq\epsilon$ and $\mathbf{L}_o$ is an optimal solution to the following $\mathrm{DFT}_{\ell_1}$ program:
\begin{align}\label{eq:dft:noisy}
&\min_{\mathbf{L}} \norm{\mathcal{F}(\mathbf{L})}_1, \textrm{ s.t. } \|\mathcal{P}_{\Omega}(\mathbf{L}-\mathbf{M})\|_F \leq\epsilon.
\end{align}
If $\rho_0 > 1 - 0.22/(\mu_m(\mathbf{L}_0)\|\mathcal{F}(\mathbf{L}_0)\|_0)$, then $\mathbf{L}_o$ gives a near recovery to $\mathbf{L}_0$, in a sense that
\begin{align}
\|\mathbf{L}_o-\mathbf{L}_0\|_F\leq{}(1+\sqrt{2})(38\sqrt{m}+2)\epsilon.
\end{align}
\end{coro}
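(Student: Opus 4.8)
The plan is to obtain this corollary directly from Theorem~\ref{main:thm:cnnm:noisy} by exploiting the fact that $\mathrm{DFT}_{\ell_1}$ is nothing more than CNNM specialized to the full-size kernel $k_j = m_j$, exactly the same reduction that produced the noiseless Corollary~\ref{main:coro:dft:noiseless} from Theorem~\ref{main:thm:cnnm:noiseless}. First I would record the identities that hold whenever $k_j = m_j$ for all $j$. The kernel dimension collapses to $k = \Pi_{j=1}^n k_j = \Pi_{j=1}^n m_j = m$; the diagonalization of $\mathcal{A}_m(L_0)$ by the Kronecker DFT matrix established earlier gives $r_k(L_0) = \rank{\mathcal{A}_m(L_0)} = \norm{\mathcal{F}(L_0)}_0$; and, by definition, the convolution coherence $\mu_k(L_0)$ becomes $\mu_m(L_0)$. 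The same diagonalization also yields $\norm{\mathcal{A}_m(L)}_* = \norm{\mathcal{F}(L)}_1$ for every $L$, so the CNNM program~\eqref{eq:cnnm:noisy} and the $\mathrm{DFT}_{\ell_1}$ program~\eqref{eq:dft:noisy} share identical objectives and identical constraints---they are literally the same optimization problem, and hence have the same optimal solution $L_o$.

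With these identifications in hand, the second step is a direct substitution. Feeding $k = m$ into the sampling condition $\rho_0 > 1 - 0.22 k / (\mu_k(L_0) r_k(L_0) m)$ of Theorem~\ref{main:thm:cnnm:noisy} cancels the factor $k/m = 1$ and, after replacing $r_k(L_0)$ by $\norm{\mathcal{F}(L_0)}_0$ and $\mu_k(L_0)$ by $\mu_m(L_0)$, produces exactly the hypothesis $\rho_0 > 1 - 0.22/(\mu_m(L_0)\norm{\mathcal{F}(L_0)}_0)$ assumed here. Likewise, substituting $k = m$ into the guaranteed error bound $\norm{L_o - L_0}_F \le (1+\sqrt{2})(38\sqrt{k}+2)\epsilon$ turns it into $(1+\sqrt{2})(38\sqrt{m}+2)\epsilon$, which is the asserted conclusion. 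Since the noise hypothesis $\norm{\mathcal{P}_\Omega(M-L_0)}_F \le \epsilon$ is identical in both statements, every premise of Theorem~\ref{main:thm:cnnm:noisy} is met and its conclusion transfers verbatim.

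Because this is a pure specialization, I do not anticipate any genuine obstacle; the only thing to verify carefully is that the numerical constants ($0.22$, $1+\sqrt{2}$, $38$, and the additive $2$) are kernel-independent in Theorem~\ref{main:thm:cnnm:noisy}, so that they survive the substitution unchanged, and that the $\sqrt{k}$ appearing in the error bound is indeed the kernel dimension $k = \Pi_{j=1}^n k_j$ rather than some other quantity---both of which are evident from the statement of that theorem. Once the equivalence of the two programs and the parameter identities $k=m$, $r_k(L_0)=\norm{\mathcal{F}(L_0)}_0$, $\mu_k(L_0)=\mu_m(L_0)$ are recorded, the corollary follows in one line.
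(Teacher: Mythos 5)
Your proposal is correct and is exactly the paper's own route: the paper presents this corollary as an immediate specialization of Theorem~\ref{main:thm:cnnm:noisy} to the full-size kernel $k_j=m_j$, using $\norm{\mathcal{A}_m(L)}_*=\norm{\mathcal{F}(L)}_1$, $r_m(L_0)=\norm{\mathcal{F}(L_0)}_0$, and $k=m$ so that the sampling threshold and the error bound $(1+\sqrt{2})(38\sqrt{k}+2)\epsilon$ transfer verbatim. Your additional care about the constants being kernel-independent is sound and nothing further is needed.
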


For any data tensor $\mathbf{M}$, one can always decompose it into $\mathbf{M} = \mathbf{L}_0 + \mathbf{N}$ with $\mathcal{A}_k(\mathbf{L}_0)$ being strictly low-rank and $\|\mathbf{N}\|_F\leq\epsilon$. So, the recovery error of CNNM is consistently bounded from above regardless the structure of data. Yet, this does not mean that CNNM can work well on all kinds of data: Whenever $\mathcal{A}_k(\mathbf{M})$ is far from being low-rank, the residual $\epsilon$ could be large and the obtained recovery is unnecessarily accurate.
\subsection{On Convolutional Low-Rankness}\label{sec:convlowrankness}
The analyses presented in the above subsection illustrate that, for CNNM to work well, it is important that the convolution matrix of the data tensor $\mathbf{M}$ is strictly or close to be low-rank, i.e., $\mathbf{M}$ itself is convolutionally low-rank or approximately so. This condition cannot be met by all kinds of data, but there do exist many examples of compliable. For example, suppose that $\mathbf{M}\in\mathbb{R}^{m_1\times\cdots\times{}m_n}$ is a periodic tensor with period $(\pi_1,\cdots,\pi_n)$ ($1\leq{}\pi_q\ll{}m_q,\forall{}q$); namely,
\begin{align}\label{eq:period}
\mathbf{M} = \mathcal{S}(\mathbf{M}, \pi_q, q), \forall{}q=1,\cdots,n,
\end{align}
where $\mathcal{S}(\cdot, \cdot, \cdot)$ is the multi-directional shift operator used in~\eqref{eq:multi-shift}. Then it may be easily seen that
\begin{align}
r_k(\mathbf{M})=\rank{\mathcal{A}_k(\mathbf{M})} \leq\prod_{q=1}^n\pi_q, \forall{}k.
\end{align}
That is, periodicity can lead to convolutional low-rankness in a strict manner. This result also implies that, while coping with periodic tensors, the kernel should be made as large as possible, i.e., $k_q=m_q$, $\forall{}q$.

Besides periodicity, the smoothness of a signal can also result in convolutional low-rankness in an approximate fashion. To see why, we shall begin with the case of $n=1$, i.e., $\mathbf{M}$ is a vector. In this case, the $j$th column of the convolution matrix $\mathcal{A}_k(\mathbf{M})$ is simply the vector obtained by circularly shifting the entries in $\mathbf{M}$ by $j-1$ positions. Intuitively, when $\mathbf{M}$ possesses substantial smoothness and the shift degree is relatively small, the signals before and after circular shift are mostly the same and therefore $\mathcal{A}_k(\mathbf{M})$ may possess low-rankness. To reach a rigorous conclusion, we consider the \emph{rank-$r$ approximation error} of matrices, which is denoted by $\varepsilon_r(\cdot)$ and defined as
\begin{align}
&\varepsilon_r(Y) = \min_{X}\|X-Y\|_F, \textrm{ s.t. } \rank{X}\leq{}r,
\end{align}
where $Y\in\mathbb{R}^{a\times{}b}$ is a matrix and $r$ is an integer between 1 and $\min(a,b)$. In the context of circular convolution, the smoothness of a vector $\mathbf{x}$ can be measured by a quantity, denoted as $\delta(\cdot)$, that is similar to the well-known \emph{total variation}:
\begin{align}\label{eq:tv}
\delta(\mathbf{x})=\|\mathbf{x}-\mathcal{S}(\mathbf{x})\|_2,
\end{align}
where $\|\cdot\|_2$ is the $\ell_2$ norm of a vector and $\mathcal{S}(\cdot)$ is the circular shift operator used in~\eqref{eq:circshift}. With these notations, it is provable that the rank-$r$ approximation error of the convolution matrix of $\mathbf{M}$ satisfies
\begin{align}\label{eq:apperr}
&\varepsilon_r(\mathcal{A}_k(\mathbf{M}))\leq\frac{\lceil\frac{k}{r}\rceil(k-r)}{2}\delta(\mathbf{M}),
\end{align}
where $\lceil\frac{k}{r}\rceil$ standards for the smallest integer greater than or equal to $k/r$. Hence, the smoothness, which has been adapted to the circulant boundary condition, provably leads to convolutional low-rankness in an approximate sense.
\begin{proof}When $\mathbf{M}$ is an $m$-dimensional vector, we have $\mathcal{A}_k(\mathbf{M})=[\mathbf{M},\mathcal{S}(\mathbf{M}),\cdots,\mathcal{S}^{k-1}(\mathbf{M})]$. Decompose $\mathcal{A}_k(\mathbf{M})$ into the concatenation of $r$ submatrices, namely $\mathcal{A}_k(\mathbf{M})=[A_1,A_2,\cdots,A_r]$, such that $A_i$ has $b_i$ columns with $1\leq{}b_i\leq\lceil\frac{k}{r}\rceil$ and $\sum_{i=1}^rb_i=k$. For $A_i\in\mathbb{R}^{m\times{}b_i}$, construct a rank-1 matrix $\hat{A}_i\in\mathbb{R}^{m\times{}b_i}$ by repeating the first column of $A_i$ for $b_i$ times. Then we have
\begin{align}
&\varepsilon_1(A_i)\leq\|A_i-\hat{A}_i\|_F\leq\sum_{c=0}^{b_i - 1} c\delta(\mathbf{M})\\\nonumber
&=\frac{b_i(b_i-1)}{2}\delta(\mathbf{M})\leq\frac{\lceil\frac{k}{r}\rceil(b_i-1)}{2}\delta(\mathbf{M}),
\end{align}
which gives that
\begin{align}
\varepsilon_r(\mathcal{A}_k(\mathbf{M}))\leq\sum_{i=1}^{r}\varepsilon_1(A_i)\leq\frac{\lceil\frac{k}{r}\rceil(k-r)}{2}\delta(\mathbf{M}).
\end{align}
\end{proof}
The above arguments can be easily extended to the general case of $n\geq1$. To prove that the conclusion in~\eqref{eq:apperr} holds for any tensors of order $n\geq1$, one just needs to generalize the definition in~\eqref{eq:tv} to the following:
\begin{align}\label{eq:tv2}
\delta(\mathbf{X})=\max_{1\leq{}q\leq{}n}\|\mathbf{X}-\mathcal{S}(\mathbf{X}, 1, q)\|_F,
\end{align}
where $\mathbf{X}$ is an order-$n$ tensor with $n\geq1$ and $\mathcal{S}(\cdot, \cdot, \cdot)$ is the multi-directional shift operator.

Since $\norm{\mathcal{F}(\mathbf{M})}_0=r_m(\mathbf{M}), \forall{}\mathbf{M}$, the result in~\eqref{eq:apperr} is also helpful for understanding the phenomenon of Fourier sparsity, which appears frequently in many domains, ranging from images~\cite{Don06} and videos~\cite{anan:silped:1996} to Boolean functions~\cite{OD08} and wideband channels~\cite{Lin2012}. More precisely, regarding the smooth signals with bounded values, e.g., images and videos, the smoothness quantity defined in~\eqref{eq:tv2} is often small, thereby convolutional low-rankness is very likely to occur, and so for Fourier sparsity.
\subsection{Optimization Algorithms}\label{sec:optalg}
\noindent\textbf{Algorithm for CNNM:} For the ease of implementation, we shall not try to solve problem~\eqref{eq:cnnm:noisy} directly, but instead consider its equivalent version as in the following:
\begin{align}\label{eq:cnnm:noisy2}
&\min_{\mathbf{L}} \norm{\mathcal{A}_k(\mathbf{L})}_{*}+\frac{\lambda{}k}{2}\|\mathcal{P}_{\Omega}(\mathbf{L}-\mathbf{M})\|_F^2,
\end{align}
where we amplify the parameter $\lambda$ by a factor of $k=\Pi_{j=1}^nk_j$ for the purpose of normalizing the two objectives to a similar scale. This problem is convex and can be solved by Alternating Direction Method of Multipliers (ADMM)~\cite{admm:1976,alm:2009:lin}. We first convert it to the following equivalent problem:
\begin{align}
\min_{\mathbf{L},Z} \norm{Z}_{*}+\frac{\lambda{}k}{2}\|\mathcal{P}_{\Omega}(\mathbf{L}-\mathbf{M})\|_F^2, \textrm{ s.t. } \mathcal{A}_k(\mathbf{L}) = Z.
\end{align}
Then the ADMM algorithm minimizes the augmented Lagrangian function,
\begin{align}
&\norm{Z}_{*}+\frac{\lambda{}k}{2}\|\mathcal{P}_{\Omega}(\mathbf{L}-\mathbf{M})\|_F^2+\langle\mathcal{A}_k(\mathbf{L}) - Z,Y\rangle \\\nonumber
&+ \frac{\tau}{2}\|\mathcal{A}_k(\mathbf{L}) - Z\|_F^2,
\end{align}
with respect to $\mathbf{L}$ and $Z$, respectively, by fixing the other variables and then updating the Lagrange multiplier $Y$ and the penalty parameter $\tau$. Namely, while fixing the other variables, the variable $Z$ is updated by
\begin{align}
Z = \arg\min_{Z}\frac{1}{\tau}\|Z\|_* + \frac{1}{2}\left\|Z - \left(\mathcal{A}_k(\mathbf{L})+\frac{Y}{\tau}\right)\right\|_F^2,
\end{align}
which is solved via Singular Value Thresholding (SVT)~\citep{svt:cai:2008}. While fixing the others, the variable $\mathbf{L}$ is updated via
\begin{align}
\mathbf{L} = (\lambda\mathcal{P}_{\Omega}+\tau\mathcal{I})^{-1}\left(\frac{\mathcal{A}_k^*(\tau{}Z - Y)}{k}+\lambda\mathcal{P}_{\Omega}(\mathbf{M})\right),
\end{align}
where the inverse operator is simply the entry-wise tensor division. The convergence of ADMM with two or fewer blocks has been well understood, and researchers had even developed advanced techniques to improve its convergence speed, see~\citep{admm:1976,lin:nips:2017}. While solving the CNNM problem, the computation of each ADMM iteration is dominated by the SVT step, which has a complexity of $\mathcal{O}(mk^2)$. Usually, depending on the increase rate of the penalty parameter $\tau$, the number of iterations for ADMM to get converged may range from tens to hundreds. \\

\noindent\textbf{Algorithm for $\mathbf{DFT_{\ell_1}}$:} We shall consider the following problem that is equivalent to the $\mathrm{DFT}_{\ell_1}$ program~\eqref{eq:dft:noisy}:
\begin{align}\label{eq:tc:dftl1}
&\min_{\mathbf{L}}\norm{\mathcal{F}(\mathbf{L})}_1+\frac{\lambda{}m}{2}\|\mathcal{P}_{\Omega}(\mathbf{L}-\mathbf{M})\|_F^2,
\end{align}
where $\lambda>0$ is a parameter. Due to the connection given in~\eqref{eq:dftnucconn}, the solution to~\eqref{eq:tc:dftl1} can be determined by finding a solution to~\eqref{eq:cnnm:noisy2} with $k_j=m_j,\forall{}j$. Yet, for the sake of efficiency, it is better to implement a specialized algorithm that can utilize the advantage of DFT.

Generally, the problem in~\eqref{eq:tc:dftl1} is solved in a similar way to the CNNM problem~\eqref{eq:cnnm:noisy2}. The main difference happens in updating $\mathbf{Z}$ and $\mathbf{L}$. While fixing the other variables and updating $\mathbf{Z}$, one needs to solve the following convex problem:
\begin{align}
\mathbf{Z} = \arg\min_{\mathbf{Z}}\frac{1}{\tau}\|\mathbf{Z}\|_1 + \frac{1}{2}\left\|\mathbf{Z} - \left(\mathcal{F}(\mathbf{L})+\frac{\mathbf{Y}}{\tau}\right)\right\|_F^2.
\end{align}
Note here that the variable $\mathbf{Z}$ is of complex-valued, and thus one needs to invoke Lemma 4.1 of~\cite{tpami_2013_lrr} to obtain a closed-form solution; namely,
\begin{align}
\mathbf{Z} = f_{1/\tau}\left(\mathcal{F}(\mathbf{L})+\frac{\mathbf{Y}}{\tau}\right),
\end{align}
where $f_{\alpha}(\cdot)$, a mapping parameterized by $\alpha>0$, is an entry-wise shrinkage operator given by
\begin{align}\label{eq:shrink}
f_{\alpha}(z) = \left\{\begin{array}{cc}
\frac{|z|-\alpha}{|z|}z, & \textrm{if } |z|>\alpha,\\
0, & \textrm{otherwise},
\end{array}\right.\quad\forall{}z\in\mathbb{C}.
\end{align}
While fixing the others, the variable $\mathbf{L}$ is updated via
\begin{align}
\mathbf{L} = (\lambda\mathcal{P}_{\Omega}+\tau\mathcal{I})^{-1}\left(\frac{\mathcal{F}^*(\tau{}\mathbf{Z} - \mathbf{Y})}{m}+\lambda\mathcal{P}_{\Omega}(\mathbf{M})\right),
\end{align}
where $\mathcal{F}^*$ denotes the Hermitian adjoint of DFT and is given by $m\mathcal{F}^{-1}$. As can be seen, the computational load is dominated by the calculations of the DFT operator as well as its inverse. Due to the strengths of the Fast Fourier Transform (FFT) algorithm~\cite{fft:1998}, calculating DFT or inverse DFT for an $m_1\times\cdots\times{}m_n$ tensor needs a computational complexity of only $\mathcal{O}(m\log{}m)$ with $m=\Pi_{j=1}^nm_j$.
\subsection{Discussions}\label{sc:main:diss}
\begin{figure}[h!]
\begin{center}
\includegraphics[width=0.48\textwidth]{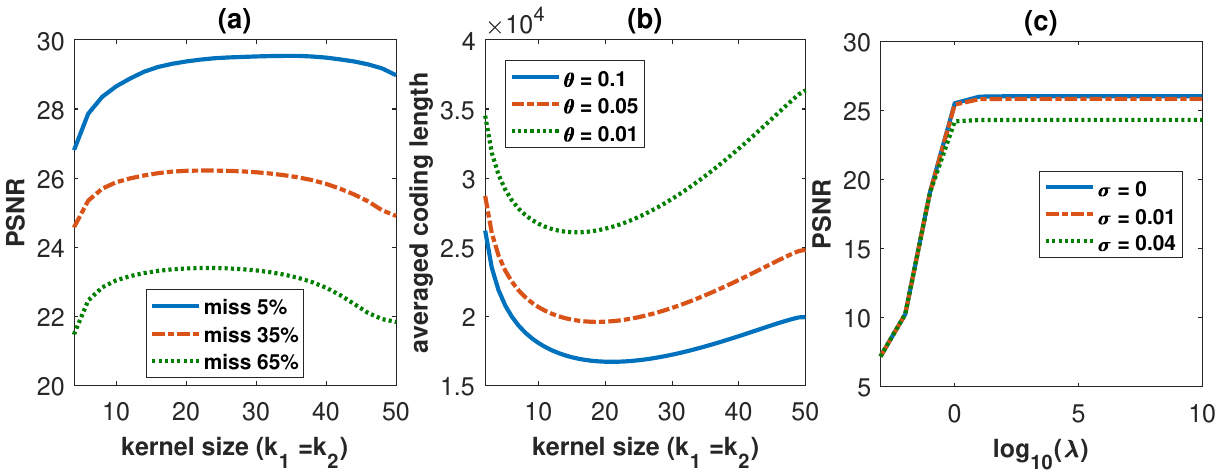}\vspace{-0.15in}
\caption{Exploring the influences of the parameters in CNNM, using a $50\times50$ image patch as the experimental data. (a) Plotting the recovery accuracy as a function of the kernel size. (b) Plotting the averaged coding length as a function of the kernel size. (c) Plotting the recovery accuracy as a function of the parameter $\lambda$. For the experiments in (c), the missing rate is set as 35\% and the observed entries are contaminated by iid Gaussian noise with mean 0 and standard deviation $\sigma$. Note that in this paper the PSNR measure is evaluated only on the missing entries. }\label{fig:paras}
\end{center}
\end{figure}
\noindent\textbf{On Influences of Parameters:} The hyper-parameters in CNNM mainly include the kernel size $k_1\times{}\cdots\times{}k_n$ and the regularization parameter $\lambda$.

According to the results in Figure~\ref{fig:sc}, it seems beneficial to use large kernels. But this is not the case with most real-world data. As shown in Figure~\ref{fig:paras}(a), the recovery accuracy of CNNM increases as the enlargement of the adopted kernel size at first, but then drops eventually as the kernel size continues to grow. In fact, both phenomena are consistent with the our theories, which say that the sampling bound is positively related to $r_k(\mathbf{L}_0)/k$ with $k=\Pi_{j=1}^nk_j$. For the particular example in Figure~\ref{fig:sc}, $r_k(\mathbf{L}_0)\equiv2$, $\forall{}k\geq2$, and thus large $k$ produces better recovery. However, on realistic data, the convolution rank may increase as the kernel size grows. To show the consistence in this case, we would like to investigate empirically the \emph{coding length}~\citep{yi:tpami:2007} of the convolution matrix of $\mathbf{L}_0\in\Re^{m_1\times\cdots\times{}m_n}$:
\begin{align}
&\mathrm{CL}_{\theta}(\mathcal{A}_k(\mathbf{L}_0)) \\\nonumber
&=\frac{1}{2}(m+k)\log\mathrm{det}\left(\Id +  \frac{m}{k\theta^2}\mathcal{A}_k(\mathbf{L}_0)(\mathcal{A}_k(\mathbf{L}_0))^T\right),
\end{align}
where $m=\Pi_{j=1}^nm_j$, $\mathrm{det}(\cdot)$ is the determinant of a matrix, and $\theta>0$ is a parameter. In general, $\mathrm{CL}_{\theta}(\mathcal{A}_k(\mathbf{L}_0))$ is no more than a computationally-friendly approximate to the convolution rank $r_k(\mathbf{L}_0)$, thereby a reasonable approximate to $r_k(\mathbf{L}_0)/k$ is given by
\begin{align}
&\mathrm{ACL}_{\theta}(\mathcal{A}_k(\mathbf{L}_0)) = \frac{\mathrm{CL}_{\theta}(\mathcal{A}_k(\mathbf{L}_0))}{k}\\\nonumber
&=\frac{1}{2}\left(\frac{m}{k}+1\right)\log\mathrm{det}\left(\Id + \frac{m}{k\theta^2}\mathcal{A}_k(\mathbf{L}_0)(\mathcal{A}_k(\mathbf{L}_0))^T\right),
\end{align}
where $\mathrm{ACL}_{\theta}(\cdot)$ is the \emph{averaged coding length} of a matrix.

As we can see from Figure~\ref{fig:paras}(b), the averaged coding length of $\mathcal{A}_k(\mathbf{L}_0)$ is minimized at some value between $k=1$ and $k=m$, and, interestingly, the minimizer can coincide with the point that maximizes the recovery accuracy provided that the parameter $\theta$ is chosen properly. So, to gain the ``best'' performance, the kernel size in CNNM should be set properly according to the structure of the target $\mathbf{L}_0$. Since $\mathbf{L}_0$ is unknown, estimating the parameters $\{k_1,\cdots{},k_n\}$ is essentially a challenging model selection problem, and there is no magic to ascertain the best choice. In fact, the uncertainty in determining $\{k_1,\cdots,k_n\}$ is related to \emph{interval forecasting} (see~\cite{lbcnnm:2021}). Regarding the setup of \emph{point forecasting} adopted in this paper, we would suggest some empirical rules. Let $k_j=\alpha_jm_j$, then the hyper-parameters $\{\alpha_1,\cdots,\alpha_n\}$ could be set as follows:
\begin{itemize}
\item[$\bullet$] The parameter $\alpha_n$, which is associated with the time-dimension, should satisfy $h/m_n<\alpha_n\leq1$, where $h$ is the forecast horizon. In most cases, $\alpha_n=0.5$ is a suitable choice.
\item[$\bullet$] The parameters $\{\alpha_1,\cdots{},\alpha_{n-1}\}$, which correspond to the non-time dimensions, just need to obey $0<\alpha_j\leq1, \forall{}1\leq{j}\leq{}n-1$. Empirically, while handling natural images and videos, near optimal recovery performance is often attained at $\alpha_j=0.25,\forall{}j$.
\end{itemize}

Figure~\ref{fig:paras}(c) shows the influence of the parameter $\lambda$. As we can see, there seems no need to tune $\lambda$ carefully. The reason is probably because the nuclear norm is already good at handling heavy-tailed data. So, no matter whether the observations are contaminated by noise or not, we would suggest setting $\lambda=1000$ for CNNM and $\mathrm{DFT}_{\ell_1}$.\\

\noindent\textbf{Is Convolution Rank Minimization NP-Hard?} Consider the original form of the CNNM problem~\eqref{eq:tc:cnnm:exact}:
\begin{align}\label{eq:convrankmin}
\min_{\mathbf{L}} \rank{\mathcal{A}_k(\mathbf{L})}, \textrm{ s.t. } \mathcal{P}_{\Omega}(\mathbf{L}-\mathbf{L}_0) = 0.
\end{align}
It seems not easy to figure out whether the above problem is NP-hard, as it is seemingly difficult to be reduced from some existing NP-hard problem. So we shall examine instead the special case of $k_i=m_i$, $1\leq{}i\leq{}n$, i.e., the original form of the $\mathrm{DFT}_{\ell_1}$ problem in~\eqref{eq:tc:dftl1:exact}:
\begin{align}\label{eq:convrankmin1}
\min_{\mathbf{L}} \|\mathcal{F}(\mathbf{L})\|_0, \textrm{ s.t. } \mathcal{P}_{\Omega}(\mathbf{L}-\mathbf{L}_0) = 0.
\end{align}
According to the derivations around~\eqref{eq:dftl1:equi}, the above $\ell_0$ minimization problem is equivalent to the classic problem of finding the sparsest vector in an affine subspace, which is known to be NP-hard~\cite{siam:1995:bk}. As a consequence, in general cases, the problem in~\eqref{eq:convrankmin1} is NP-hard, and so for~\eqref{eq:convrankmin}.

It is worth mentioning that the convolution nuclear norm might not be the tightest convex approximation of the convolution rank. This is because, on the set of convolution matrices, the matrix nuclear norm is probably no longer the convex envelope of the rank function. That said, our theorems prove that the convolution nuclear norm minimization can produce exact solutions, as long as certain sampling complexity conditions are met.
\section{Mathematical Proofs}\label{sec:proof}
This section presents in detail the proofs to the proposed lemmas and theorems.
\subsection{Proof to Lemma~\ref{lem:alge:conv}}\label{sec:proof1}
\begin{proof}
We shall revisit the notations defined in Section~\ref{sec:convmtx}. Denote by $\mathcal{S}$ the ``circshift'' operator in Matlab; namely, $\mathcal{S}(\mathbf{G},u,v)$ circularly shifts the elements in tensor $\mathbf{G}$ by $u$ positions along the $v$th direction. For any $(i_1,\cdots,i_n)\in\{1,\cdots,k_1\}\times\cdots\times\{1,\cdots,k_n\}$, we define an invertible operator $\mathcal{T}_{(i_1,\cdots,i_n)}: \mathbb{R}^{m_1\times\cdots\times{}m_n}\rightarrow\mathbb{R}^m$ as
\begin{align}
 \mathcal{T}_{(i_1,\cdots,i_n)}(\mathbf{G}) = \mathrm{vec}(\mathbf{G}_n),\forall{}\mathbf{G}\in\mathbb{R}^{m_1\times\cdots\times{}m_n},
\end{align}
where $\mathrm{vec}(\cdot)$ is the vectorization operator and $\mathbf{G}_n$ is determined by the following recursion rule:
\begin{align}
\mathbf{G}_0= \mathbf{G},\mathbf{G}_q=\mathcal{S}(\mathbf{G}_{q-1},i_q-1,q), 1\leq{}q\leq{}n.
\end{align}
Suppose that $j=1+\sum_{a=1}^{n}(i_a-1)\Pi_{b=0}^{a-1}k_b$, where it is assumed conveniently that $k_0 = 1$. Then we have
\begin{align}\label{eq:temp:1}
[\mathcal{A}_k(\mathbf{X})]_{:,j} = \mathcal{T}_{(i_1,\cdots,i_n)}(\mathbf{X}).
\end{align}
According to the definition of the Hermitian adjoint operator given in~\eqref{eq:adjoint}, we have
\begin{align}\label{eq:temp:2}
\mathcal{A}_k^*(Z) = \sum_{i_1,\cdots,i_n}\mathcal{T}_{(i_1,\cdots,i_n)}^{-1}([Z]_{:,j}), \forall{}Z\in\mathbb{R}^{m\times{}k},
\end{align}
where it is worth noting that the number $j$ functionally depends on the index $(i_1,\cdots,i_n)$. By~\eqref{eq:temp:1} and~\eqref{eq:temp:2},
\begin{align}
\mathcal{A}_k^*\mathcal{A}_k(\mathbf{X})=\sum_{i_1,\cdots,i_n}\mathcal{T}_{(i_1,\cdots,i_n)}^{-1}\mathcal{T}_{(i_1,\cdots,i_n)}(\mathbf{X})=k\mathbf{X}.
\end{align}
The second claim is easy to prove. By~\eqref{eq:omega},~\eqref{eq:conv:omega} and~\eqref{eq:temp:1},
\begin{align}
&[\mathcal{A}_k\mathcal{P}_{\Omega}(\mathbf{X})]_{:,j} =\mathcal{T}_{(i_1,\cdots,i_n)}(\mathbf{\Theta}_{\Omega}\circ{}\mathbf{X})=\\ \nonumber &\mathcal{T}_{(i_1,\cdots,i_n)}(\mathbf{\Theta}_{\Omega})\circ{}\mathcal{T}_{(i_1,\cdots,i_n)}(\mathbf{X})=[\Theta_{\Omega_\mathcal{A}}]_{:,j}\circ{}[\mathcal{A}_k(\mathbf{X})]_{:,j}\\\nonumber
&=[\mathcal{P}_{\Omega_{\mathcal{A}}}\mathcal{A}_k(\mathbf{X})]_{:,j}.
\end{align}
It remains to prove the third claim. By~\eqref{eq:omega} and~\eqref{eq:conv:omega},
\begin{align}
&\mathcal{A}_k^*\mathcal{P}_{\Omega_{\mathcal{A}}}(Y) =\mathcal{A}_k^*(\Theta_{\Omega_{\mathcal{A}}}\circ{}Y) = \mathcal{A}_k^*(\mathcal{A}_k(\mathbf{\Theta}_{\Omega})\circ{}Y),
\end{align}
which, together with~\eqref{eq:temp:1} and~\eqref{eq:temp:2}, gives that
\begin{align}
&\mathcal{A}_k^*\mathcal{P}_{\Omega_{\mathcal{A}}}(Y)=\sum_{i_1,\cdots,i_n}\mathcal{T}_{(i_1,\cdots,i_n)}^{-1}([\mathcal{A}_k(\mathbf{\Theta}_\Omega)\circ{}Y]_{:,j})\\\nonumber
&=\sum_{i_1,\cdots,i_n}\mathcal{T}_{(i_1,\cdots,i_n)}^{-1}([\mathcal{A}_k(\mathbf{\Theta}_\Omega)]_{:,j})\circ{}\mathcal{T}_{(i_1,\cdots,i_n)}^{-1}([Y]_{:,j})\\\nonumber
&=\sum_{i_1,\cdots,i_n}\mathbf{\Theta}_{\Omega}\circ{}\mathcal{T}_{(i_1,\cdots,i_n)}^{-1}([Y]_{:,j})= \mathcal{P}_{\Omega}\mathcal{A}_k^*(Y).
\end{align}
\end{proof}
\subsection{Proof to Theorem~\ref{main:thm:cnnm:noiseless}}
The proof process is quite standard. We shall first prove the following lemma that establishes the conditions under which the solution to~\eqref{eq:tc:cnnm:exact} is unique and exact.
\begin{lemm}\label{lem:dual}Suppose the skinny SVD of the convolution matrix of $\mathbf{L}_0$ is given by $\mathcal{A}_k(\mathbf{L}_0)=U_0\Sigma_0V_0^T$. Denote by $\mathcal{P}_{T_0}(\cdot)=$ $U_0U_0^T(\cdot)+(\cdot)V_0V_0^T-U_0U_0^T(\cdot)V_0V_0^T$ the orthogonal projection onto the sum of $U_0$ and $V_0$. Then $\mathbf{L}_0$ is the unique minimizer to the problem in~\eqref{eq:tc:cnnm:exact} provided that:
\begin{itemize}
\item[1.] $\mathcal{P}_{\Omega_{\mathcal{A}}}^\bot\cap\mathcal{}P_{T_0}=\{0\}$.
\item[2.] There exists $Y\in\mathbb{R}^{m\times{}k}$ such that $\mathcal{}P_{T_0}\mathcal{P}_{\Omega_{\mathcal{A}}}(Y) = U_0V_0^T$ and $\|\mathcal{}P_{T_0}^\bot\mathcal{P}_{\Omega_{\mathcal{A}}}(Y)\|<1$.
\end{itemize}
\end{lemm}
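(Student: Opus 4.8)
The plan is to run the standard dual-certificate argument for nuclear-norm minimization, transported to the convolution-matrix domain via the algebraic identities of Lemma~\ref{lem:alge:conv}. First I would reduce uniqueness to a strict-increase statement: let $L=L_0+\Delta$ be any feasible point of~\eqref{eq:tc:cnnm:eqiv}, so $\mathcal{P}_{\Omega}(\Delta)=0$, and set $H:=\mathcal{A}_k(\Delta)$. Since $\mathcal{A}_k^*\mathcal{A}_k=k\mathcal{I}$ by Lemma~\ref{lem:alge:conv}, the operator $\mathcal{A}_k$ is injective, so $\Delta\neq0$ iff $H\neq0$; thus it suffices to show $\|\mathcal{A}_k(L_0)+H\|_*>\|\mathcal{A}_k(L_0)\|_*$ whenever $H\neq0$. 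The key structural consequence of feasibility is that $H$ lives in $\mathcal{P}_{\Omega_{\mathcal{A}}}^\bot$: applying the intertwining identity $\mathcal{A}_k\mathcal{P}_{\Omega}=\mathcal{P}_{\Omega_{\mathcal{A}}}\mathcal{A}_k$ to $\Delta$ gives $\mathcal{P}_{\Omega_{\mathcal{A}}}(H)=\mathcal{A}_k\mathcal{P}_{\Omega}(\Delta)=0$.

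Next I would invoke the subdifferential of the nuclear norm at $\mathcal{A}_k(L_0)=U_0\Sigma_0V_0^T$, namely $\{U_0V_0^T+W:\mathcal{P}_{T_0}(W)=0,\ \|W\|\leq1\}$. Choosing $W$ supported on $T_0^\bot=\{M:U_0^TM=0,\ MV_0=0\}$ and aligned with the SVD of $\mathcal{P}_{T_0}^\bot(H)$ so that $\langle W,H\rangle=\|\mathcal{P}_{T_0}^\bot(H)\|_*$ with $\|W\|\leq1$, the convexity (subgradient) inequality yields $\|\mathcal{A}_k(L_0)+H\|_*\geq\|\mathcal{A}_k(L_0)\|_*+\langle U_0V_0^T,H\rangle+\|\mathcal{P}_{T_0}^\bot(H)\|_*$. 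It then remains to control the cross term $\langle U_0V_0^T,H\rangle$ using the certificate $Y$ from condition~2. Writing $G:=\mathcal{P}_{\Omega_{\mathcal{A}}}(Y)$, so that $\mathcal{P}_{T_0}(G)=U_0V_0^T$ and $\|\mathcal{P}_{T_0}^\bot(G)\|<1$, I decompose $\langle U_0V_0^T,H\rangle=\langle G,H\rangle-\langle\mathcal{P}_{T_0}^\bot(G),\mathcal{P}_{T_0}^\bot(H)\rangle$. The first term vanishes because $\langle G,H\rangle=\langle\mathcal{P}_{\Omega_{\mathcal{A}}}(Y),H\rangle=\langle Y,\mathcal{P}_{\Omega_{\mathcal{A}}}(H)\rangle=0$, and the second is bounded by the operator/nuclear duality $|\langle\mathcal{P}_{T_0}^\bot(G),\mathcal{P}_{T_0}^\bot(H)\rangle|\leq\|\mathcal{P}_{T_0}^\bot(G)\|\,\|\mathcal{P}_{T_0}^\bot(H)\|_*$.

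Combining these bounds gives $\|\mathcal{A}_k(L_0)+H\|_*\geq\|\mathcal{A}_k(L_0)\|_*+(1-\|\mathcal{P}_{T_0}^\bot(G)\|)\,\|\mathcal{P}_{T_0}^\bot(H)\|_*$, where the coefficient is strictly positive by condition~2. Hence the objective can fail to strictly increase only if $\mathcal{P}_{T_0}^\bot(H)=0$, i.e. $H\in\mathcal{P}_{T_0}$; but we already have $H\in\mathcal{P}_{\Omega_{\mathcal{A}}}^\bot$, so condition~1 forces $H=0$, and injectivity of $\mathcal{A}_k$ gives $\Delta=0$. I expect the only genuinely delicate step to be the construction of the subgradient $W$ that exactly extracts $\|\mathcal{P}_{T_0}^\bot(H)\|_*$ while simultaneously meeting $\|W\|\leq1$ and $\mathcal{P}_{T_0}(W)=0$; this rests on the explicit description of $T_0^\bot$ and the fact that $\mathcal{P}_{T_0}^\bot(H)$ has left and right singular subspaces orthogonal to $U_0$ and $V_0$. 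The remaining manipulations are routine once the adjoint identities of Lemma~\ref{lem:alge:conv} are in hand.
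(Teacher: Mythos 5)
Your proposal is correct and takes essentially the same route as the paper's proof: the same nuclear-norm subgradient aligned with the SVD of $\mathcal{P}_{T_0}^\bot(\mathcal{A}_k(\Delta))$, the same use of Lemma~\ref{lem:alge:conv} to show $\langle\mathcal{P}_{\Omega_{\mathcal{A}}}(Y),\mathcal{A}_k(\Delta)\rangle=0$, the same bound $(1-\|\mathcal{P}_{T_0}^\bot\mathcal{P}_{\Omega_{\mathcal{A}}}(Y)\|)\|\mathcal{P}_{T_0}^\bot(\mathcal{A}_k(\Delta))\|_*$, and the same conclusion via condition~1 plus injectivity of $\mathcal{A}_k$. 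The only differences are cosmetic: your $W$ and $\mathcal{P}_{T_0}^\bot(G)$ play the roles of the paper's $H$ and $W$ respectively, and you fold optimality and uniqueness into a single strict-increase argument rather than stating optimality separately first.
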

\begin{proof}
Take $W = \mathcal{}P_{T_0}^\bot\mathcal{P}_{\Omega_{\mathcal{A}}}(Y)$. Then $\mathcal{A}_k^*(U_0V_0^T+W) = \mathcal{A}_k^*\mathcal{P}_{\Omega_{\mathcal{A}}}(Y)$. By Lemma~\ref{lem:alge:conv},
\begin{align}
\mathcal{A}_k^*\mathcal{P}_{\Omega_{\mathcal{A}}}(Y) = \mathcal{P}_{\Omega}\mathcal{A}_k^*(Y)\in\mathcal{P}_{\Omega}.
\end{align}
By the standard convexity arguments shown in~\citep{book:convex}, $\mathbf{L}_0$ is an optimal solution to the convex optimization problem in~\eqref{eq:tc:cnnm:exact}.

It remains to prove that $\mathbf{L}_0$ is the unique minimizer. To do this, we consider a feasible solution $\mathbf{L}_0+\mathbf{\Delta}$ with $\mathcal{P}_{\Omega}(\mathbf{\Delta}) = 0$, and we shall show that the objective value strictly increases unless $\mathbf{\Delta}=0$. Due to the convexity of the convolution nuclear norm, we have
\begin{align}
&\|\mathcal{A}_k(\mathbf{L}_0+\mathbf{\Delta})\|_* - \|\mathcal{A}_k(\mathbf{L}_0)\|_* \geq \langle{}\mathcal{A}_k^*(U_0V_0^T+H), \mathbf{\Delta}\rangle \\\nonumber
&= \langle{}U_0V_0^T+H, \mathcal{A}_k(\mathbf{\Delta})\rangle,
\end{align}
where $H\in\mathcal{P}_{T_0}^\bot$ and $\|H\|\leq{}1$. The first inequality above follows from a basic property of convexity. Namely, if $f(\cdot)$ is a convex function, we have $f(\mathbf{L}_0+\mathbf{\Delta})-f(\mathbf{L}_0)\geq\langle\partial_{\mathbf{L}_0}f,\mathbf{\Delta}\rangle$, where $\partial_{\mathbf{L}_0}f$ is the subgradient of $f$ at $\mathbf{L}_0$. For $f(\mathbf{L}) = \|\mathcal{A}_k(\mathbf{L})\|_*$, we have $\partial_{\mathbf{L}_0}f=\mathcal{A}_k^*(U_0V_0^T+H)$, where $U_0V_0^T+H$ is the subgradient of the nuclear norm at $\mathcal{A}_k(\mathbf{L}_0)$.

By the duality between the operator and nuclear norms, we can always choose an $H$ such that
\begin{align}
\langle{}H, \mathcal{A}_k(\mathbf{\Delta})\rangle = \|\mathcal{P}_{T_0}^\bot\mathcal{A}_k(\mathbf{\Delta})\|_*.
\end{align}
In addition, it follows from Lemma~\ref{lem:alge:conv} that
\begin{align}
 &\langle{}\mathcal{P}_{\Omega_{\mathcal{A}}}(Y), \mathcal{A}_k(\mathbf{\Delta})\rangle =  \langle{}Y, \mathcal{P}_{\Omega_{\mathcal{A}}}\mathcal{A}_k(\mathbf{\Delta})\rangle \\\nonumber
 &= \langle{}Y, \mathcal{A}_k\mathcal{P}_{\Omega}(\mathbf{\Delta})\rangle = 0.
\end{align}
Hence, we have
\begin{align}
&\langle{}U_0V_0^T+H, \mathcal{A}_k(\mathbf{\Delta})\rangle = \langle{}\mathcal{P}_{\Omega_{\mathcal{A}}}(Y)+H - W, \mathcal{A}_k(\mathbf{\Delta})\rangle \\\nonumber
&= \langle{}H - W, \mathcal{A}_k(\mathbf{\Delta})\rangle\geq(1-\|W\|)\|\mathcal{P}_{T_0}^\bot\mathcal{A}_k(\mathbf{\Delta})\|_*.
\end{align}
Since $\|W\|<1$, $\|\mathcal{A}_k(\mathbf{L}_0+\mathbf{\Delta})\|_*$ is greater than $\|\mathcal{A}_k(\mathbf{L}_0)\|_*$ unless $\mathcal{A}_k(\mathbf{\Delta})\in\mathcal{P}_{T_0}$. Note that $\mathcal{P}_{\Omega_{\mathcal{A}}}\mathcal{A}_k(\mathbf{\Delta}) = \mathcal{A}_k\mathcal{P}_{\Omega}(\mathbf{\Delta}) = 0$, i.e., $\mathcal{A}_k(\mathbf{\Delta})\in\mathcal{P}_{\Omega_{\mathcal{A}}}^\bot$. Since $\mathcal{P}_{\Omega_{\mathcal{A}}}^\bot\cap\mathcal{}P_{T_0}=\{0\}$, it follows that $\mathcal{A}_k(\mathbf{\Delta})=0$, which immediately leads to $\mathbf{\Delta}=0$.
\end{proof}

In the rest of the proof, we shall show how we will prove the dual conditions listed in Lemma~\ref{lem:dual}. Notice that, even if the locations of the missing entries are arbitrarily distributed, each column of $\Omega_{\mathcal{A}}$ has exactly a cardinality of $\rho_0m$, and each row of $\Omega_{\mathcal{A}}$ contains at least $k - (1-\rho_0)m$ elements. Denote by $\rho$ the smallest fraction of observed entries in each row and column of $\mathcal{A}_k(\mathbf{L}_0)$. Provided that $\rho_0>1-0.25k/(\mu_k(\mathbf{L}_0)r_k(\mathbf{L}_0)m)$, we have
\begin{align}
\rho\geq{}\frac{k-(1-\rho_0)m}{k}>1-\frac{0.25}{\mu_k(\mathbf{L}_0)r_k(\mathbf{L}_0)}.
\end{align}
Then it follows from Lemma~\ref{lem:iso:rcn} that $\mathcal{A}_k(\mathbf{L}_0)$ is $\Omega_\mathcal{A}/\Omega_\mathcal{A}^T$-isomeric and $\gamma_{\Omega_\mathcal{A},\Omega_\mathcal{A}^T}(\mathcal{A}_k(\mathbf{L}_0))>0.75$. Thus, according to Lemma 5.11 of~\citep{liu:tpami:2021}, we have
\begin{align}
\|\mathcal{P}_{T_0}\mathcal{P}_{\Omega_{\mathcal{A}}}^\bot\mathcal{P}_{T_0}\|\leq2(1-\gamma_{\Omega_\mathcal{A},\Omega_\mathcal{A}^T}(\mathcal{A}_k(\mathbf{L}_0)))<0.5<1,
\end{align}
which, together with Lemma 5.6 of~\citep{liu:tpami:2021}, results in $\mathcal{P}_{\Omega_{\mathcal{A}}}^\bot\cap\mathcal{}P_{T_0}=\{0\}$.  As a consequence, we could define $Y$ as
\begin{align}
Y = \mathcal{P}_{\Omega_{\mathcal{A}}}\mathcal{P}_{T_0}(\mathcal{P}_{T_0}\mathcal{P}_{\Omega_{\mathcal{A}}}\mathcal{P}_{T_0})^{-1}(U_0V_0^T).
\end{align}
It can be verified that $\mathcal{}P_{T_0}\mathcal{P}_{\Omega_{\mathcal{A}}}(Y) = U_0V_0^T$. Moreover, it follows from Lemma 5.12 of~\citep{liu:tpami:2021} that
\begin{align}
&\|\mathcal{}P_{T_0}^\bot\mathcal{P}_{\Omega_{\mathcal{A}}}(Y)\| \leq \|\mathcal{P}_{T_0}^\bot\mathcal{P}_{\Omega_{\mathcal{A}}}\mathcal{P}_{T_0}(\mathcal{P}_{T_0}\mathcal{P}_{\Omega_{\mathcal{A}}}\mathcal{P}_{T_0})^{-1}\|\|U_0V_0^T\|\\\nonumber
 &= \sqrt{\frac{1}{1-\|\mathcal{P}_{T_0}\mathcal{P}_{\Omega_{\mathcal{A}}}^\bot\mathcal{P}_{T_0}\|}-1}<1,
\end{align}
which finishes to construct the dual certificate.

\subsection{Proof to Theorem~\ref{main:thm:cnnm:noisy}}
\begin{proof}
Let $\mathbf{N} = \mathbf{L}_o - \mathbf{L}_0$ and denote $N_{\mathcal{A}} =\mathcal{A}_k(\mathbf{N})$. Notice that $\|\mathcal{P}_{\Omega}(\mathbf{L}_o-\mathbf{M})\|_F\leq\epsilon$ and $\|\mathcal{P}_{\Omega}(\mathbf{M} - \mathbf{L}_0)\|_F\leq\epsilon$. By triangle inequality, $\|\mathcal{P}_{\Omega}(\mathbf{N})\|_F\leq2\epsilon$. Thus,
\begin{align}
\|\mathcal{P}_{\Omega_{\mathcal{A}}}(N_{\mathcal{A}})\|_F^2 = \|\mathcal{A}_k\mathcal{P}_{\Omega}(\mathbf{N})\|_F^2= k\|\mathcal{P}_{\Omega}(\mathbf{N})\|_F^2\leq4k\epsilon^2.
\end{align}
To bound $\|\mathbf{N}\|_F$, it is sufficient to bound $\|N_{\mathcal{A}}\|_F$. So, it remains to bound $\|\mathcal{P}_{\Omega_{\mathcal{A}}}^\bot(N_{\mathcal{A}})\|_F$. To do this, we define $Y$ and $W$ in the same way as in the proof to Theorem~\ref{main:thm:cnnm:noiseless}. Since $\mathbf{L}_o=\mathbf{L}_0+\mathbf{N}$ is an optimal solution to~\eqref{eq:cnnm:noisy}, we have the following:
\begin{align}
&0\geq\|\mathcal{A}_k(\mathbf{L}_0+\mathbf{N})\|_* - \|\mathcal{A}_k(\mathbf{L}_0)\|_*\\\nonumber
&\geq(1-\|W\|)\|\mathcal{P}_{T_0}^\bot(N_{\mathcal{A}})\|_* + \langle{}\mathcal{P}_{\Omega_{\mathcal{A}}}(Y),N_{\mathcal{A}}\rangle.
\end{align}
Provided that $\rho_0>1-0.22/(\mu(\mathbf{L}_0)r(\mathbf{L}_0))$, we can prove that $\|W\|=\|\mathcal{P}_{T_0}^\bot\mathcal{P}_{\Omega_{\mathcal{A}}}(Y)\|<0.9$. As a consequence, we have the following:
\begin{align}
&\|\mathcal{P}_{T_0}^\bot(N_{\mathcal{A}})\|_*\leq-10\langle{}\mathcal{P}_{\Omega_{\mathcal{A}}}(Y),\mathcal{P}_{\Omega_{\mathcal{A}}}(N_{\mathcal{A}})\rangle\\\nonumber
&\leq10\|\mathcal{P}_{\Omega_{\mathcal{A}}}(Y)\|\|\mathcal{P}_{\Omega_{\mathcal{A}}}(N_{\mathcal{A}})\|_*\leq19\|\mathcal{P}_{\Omega_{\mathcal{A}}}(N_{\mathcal{A}})\|_*\\\nonumber
&\leq19\sqrt{k}\|\mathcal{P}_{\Omega_{\mathcal{A}}}(N_{\mathcal{A}})\|_F\leq38k\epsilon,
\end{align}
from which it follows that $\|\mathcal{P}_{T_0}^\bot(N_{\mathcal{A}})\|_F\leq\|\mathcal{P}_{T_0}^\bot(N_{\mathcal{A}})\|_*\leq38k\epsilon$, and which simply leads to
\begin{align}
\|\mathcal{P}_{T_0}^\bot\mathcal{P}_{\Omega_{\mathcal{A}}}^\bot(N_{\mathcal{A}})\|_F\leq(38k+2\sqrt{k})\epsilon.
\end{align}
We also have
\begin{align}
&\|\mathcal{P}_{\Omega_{\mathcal{A}}}\mathcal{P}_{T_0}\mathcal{P}_{\Omega_{\mathcal{A}}}^\bot(N_{\mathcal{A}})\|_F^2\\\nonumber
&=\langle\mathcal{P}_{T_0}\mathcal{P}_{\Omega_{\mathcal{A}}}\mathcal{P}_{T_0}\mathcal{P}_{\Omega_{\mathcal{A}}}^\bot(N_{\mathcal{A}}),\mathcal{P}_{T_0}\mathcal{P}_{\Omega_{\mathcal{A}}}^\bot(N_{\mathcal{A}})\rangle\\\nonumber
&\geq(1-\|\mathcal{P}_{T_0}\mathcal{P}_{\Omega_{\mathcal{A}}}^\bot\mathcal{P}_{T_0}\|)\|\mathcal{P}_{T_0}\mathcal{P}_{\Omega_{\mathcal{A}}}^\bot(N_{\mathcal{A}})\|_F^2,\\\nonumber
&\geq\frac{1}{2}\|\mathcal{P}_{T_0}\mathcal{P}_{\Omega_{\mathcal{A}}}^\bot(N_{\mathcal{A}})\|_F^2,
\end{align}
which gives that
\begin{align}
&\|\mathcal{P}_{T_0}\mathcal{P}_{\Omega_{\mathcal{A}}}^\bot(N_{\mathcal{A}})\|_F^2\leq2\|\mathcal{P}_{\Omega_{\mathcal{A}}}\mathcal{P}_{T_0}\mathcal{P}_{\Omega_{\mathcal{A}}}^\bot(N_{\mathcal{A}})\|_F^2\\\nonumber
&=2\|\mathcal{P}_{\Omega_{\mathcal{A}}}\mathcal{P}_{T_0}^\bot\mathcal{P}_{\Omega_{\mathcal{A}}}^\bot(N_{\mathcal{A}})\|_F^2\leq2(38k+2\sqrt{k})^2\epsilon^2.
\end{align}
Combining the above justifications, we have
\begin{align}
&\|N_{\mathcal{A}}\|_F\leq\|\mathcal{P}_{T_0}\mathcal{P}_{\Omega_{\mathcal{A}}}^\bot(N_{\mathcal{A}})\|_F + \|\mathcal{P}_{T_0}\mathcal{P}_{\Omega_{\mathcal{A}}}(N_{\mathcal{A}})\|_F\\\nonumber
&+ \|\mathcal{P}_{T_0}^\bot(N_{\mathcal{A}})\|_F\leq(\sqrt{2}+1)(38k + 2\sqrt{k})\epsilon.
\end{align}
Finally, the fact $\|\mathbf{N}\|_F = \|N_{\mathcal{A}}\|_F/\sqrt{k}$ finishes the proof.
\end{proof}
\begin{figure}[h!]
\begin{center}
\includegraphics[width=0.48\textwidth]{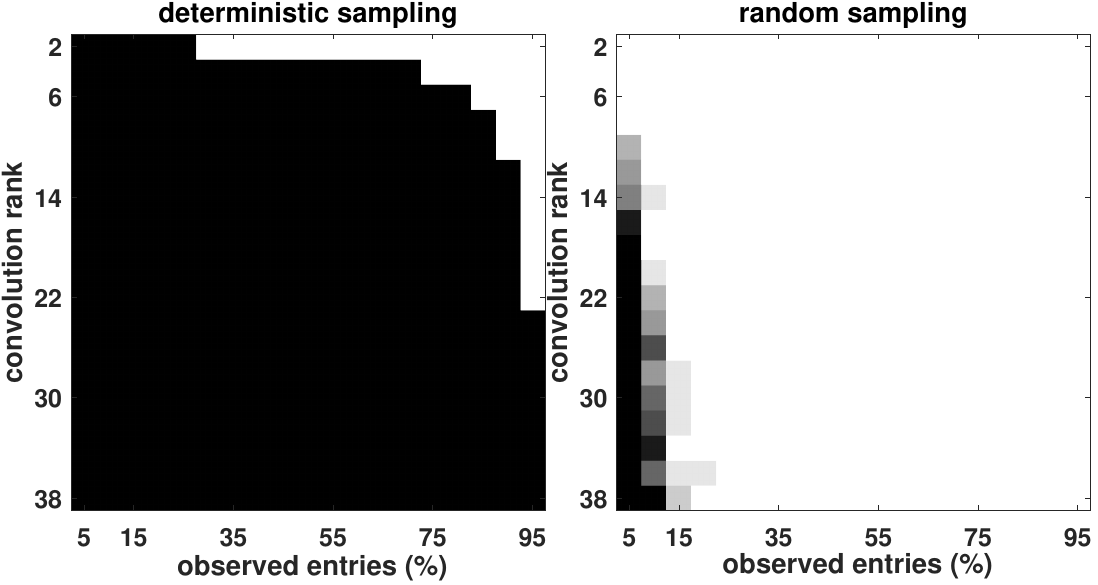}\vspace{-0.1in}
\caption{Investigating the recovery performance of $\mathrm{DFT}_{\ell_1}$ under the setup of random sampling and forecasting (i.e., deterministic sampling). The white and black areas mean ``success'' and ``failure'', respectively, where the recovery is regarded as being successful iff $\mathrm{PSNR}>50$.}\label{fig:transit}
\end{center}
\end{figure}
\begin{figure*}
\begin{center}
\includegraphics[width=0.95\textwidth]{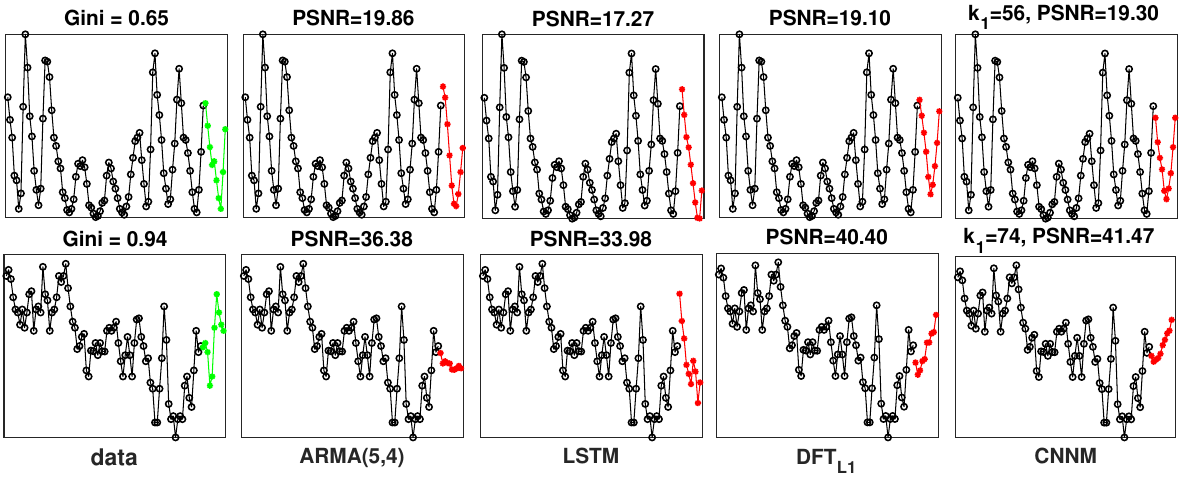}\vspace{-0.15in}
\caption{Results for univariate time series forecasting. The left figure shows the entire series used for experiments, where the observed and future entries are plotted with black and green markers, respectively. The sparsity degree of the Fourier transform of a series is measured by Gini~\cite{gini:tit:2009}.}\label{fig:1dts}
\end{center}
\end{figure*}
\section{Experiments}\label{sec:exp}
All experiments are conducted on the Matlab 2019a platform. The source codes are available at https://github.com /gcliu1982/CNNM.
\subsection{Simulations}\label{sec:exp:simu}
We shall experiment with synthetic data to verify the theorems proven in this paper. We generate $\mathbf{M}$ and $\mathbf{L}_0$ according to a model as follows: $[\mathbf{M}]_t = [\mathbf{L}_0]_t = \sum_{i=1}^a\sin(2ti\pi/m)$, where $m=1000$, $t=1,\cdots,$ $m$ and $a=1,\cdots,19$. So the target $\mathbf{L}_0$ is a univariate time series of dimension 1000, with $\|\mathcal{F}(\mathbf{L}_0)\|_0=2a$ and $\mu_m(\mathbf{L}_0)=1$. The values in $\mathbf{L}_0$ are further normalized to have a maximum of 1. Regarding the locations of observed entries, we consider two settings: One is to randomly select a subset of entries as in Figure~\ref{fig:sc}(c), referred to as ``random sampling'', the other is the forecasting setup used in Figure~\ref{fig:sc}(b), referred to as ``deterministic sampling''. The observation fraction is set as $\rho_0=0.05,0.1,\cdots,0.95$. For the setup of random sampling, we run 20 trials, and thus there are $19\times19\times21=7581$ simulations in total.

The results are shown in Figure~\ref{fig:transit}, the left part of which illustrates that the sampling complexity proven in Corollary~\ref{main:coro:dft:noiseless} is indeed tight: The boundary between success and failure is quite consistent with the curve given by $1-\mathcal{O}(1/\norm{\mathcal{F}(\mathbf{L}_0)}_0)$. Also, as we can see, the recovery of future observations is much more challenging than restoring the missing entries chosen uniformly at random. In fact, the sampling pattern of forecasting is almost the \emph{worst case} of arbitrary sampling. In these experiments, the convolution rank $r_k(\mathbf{L}_0)$ does not grow with the kernel size, so there is no benefit to try different kernel sizes in CNNM.
\subsection{Univariate Time Series Forecasting}\label{sec:exp:uni}
Now we consider two real-world time series downloaded from Time Series Data Library (TSDL): One for the annual Wolfer sunspot numbers from 1770 to 1869 with $m=100$, and the other for the highest mean monthly levels of Lake Michigan from 1860 to 1955 with $m=96$. We consider for comparison the well-known methods of Auto-Regressive Moving Average (ARMA) and Long Short-Term Memory (LSTM). ARMA contains many hyper-parameters, which are manually tuned to maximize its recovery accuracy (in terms of PSNR) on the first sequence. The LSTM architecture used for experiment is consist of four layers, including an input layer with 1 unit, a hidden LSTM layer with 200 units, a fully connected layer and a regression layer.

The results are shown in Figure~\ref{fig:1dts}. Via manually choosing the best parameters, ARMA can achieve the best performance on the first series. But its results are unsatisfactory while applying the same parametric setting to the second one. By contrast, $\mathrm{DFT}_{\ell_1}$ produces reasonable forecasts on both series that differs greatly in the evolution rules. This is not incredible, because the method never assumes explicitly how the future entries are related to the previously observed samples, and it is indeed the Fourier sparsity of the series itself that enables the recovery of the unseen future data. Via choosing proper kernel sizes, CNNM can future outperform $\mathrm{DFT}_{\ell_1}$---though the improvements here are mild. A more comprehensive evaluation about the forecasting performance of CNNM can be found in~\cite{lbcnnm:2021}.
\begin{figure}[h!]
\begin{center}
\includegraphics[width=0.35\textwidth]{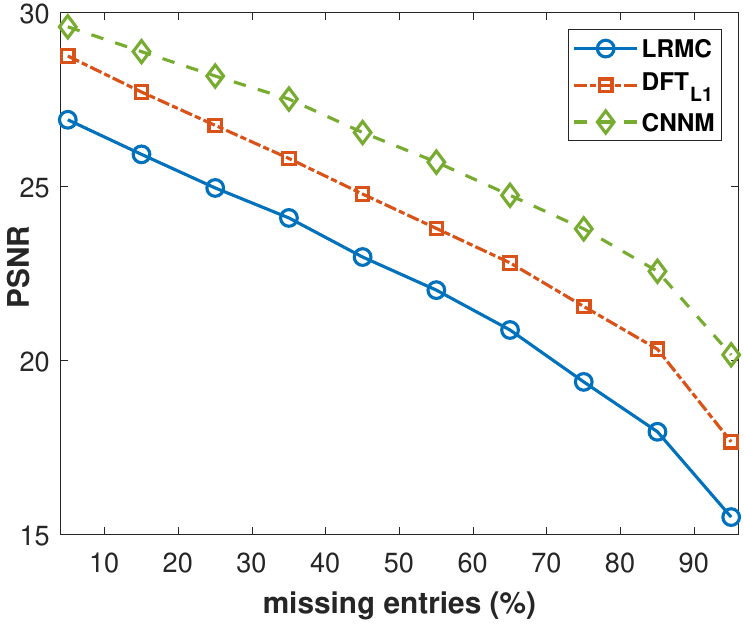}\vspace{-0.15in}
\caption{Evaluating the recovery performance of various methods, using the $200\times200$ Boats image as the experimental data. The missing entries are chosen uniformly at random, and the numbers plotted above are averaged from 20 random trials.}\label{fig:lena:acc}\vspace{-0.1in}
\end{center}
\end{figure}
\begin{table}[h!]
\caption{PSNR and running time on the $200\times200$ Boats image. In these experiments, 60\% of the image pixels are randomly chosen to be missing. The numbers shown below are collected from 20 runs.}\label{tb:lena}\vspace{-0.2in}
\begin{center}
\begin{tabular}{|c|c|c|}\hline
    Methods                  &PSNR            & Time (seconds)\\\hline
    LRMC                     &$21.43\pm0.11$  & $15.9\pm0.3$\\
    $\mathrm{DFT}_{\ell_1}$  & $23.25\pm0.10$ &$\mathbf{0.49\pm0.06}$\\
    CNNM($13\times13$)       &  $24.92\pm0.11$ &$33.2\pm0.4$\\
     CNNM($23\times23$)       & $25.15\pm0.12$ &160$\pm1$\\
     CNNM($33\times33$)       & $25.23\pm0.12$ &$429\pm2$\\
     CNNM($43\times43$)       & $25.27\pm0.12$ &$968\pm10$\\
     CNNM($53\times53$)       &  $\mathbf{25.28\pm0.12}$  &$2255\pm19$\\
     CNNM($63\times63$)       &  $25.26\pm0.12$  &$4925\pm42$\\\hline
\end{tabular}
\end{center}
\end{table}
\subsection{Image Completion}
\begin{figure*}
\begin{center}
\includegraphics[width=0.98\textwidth]{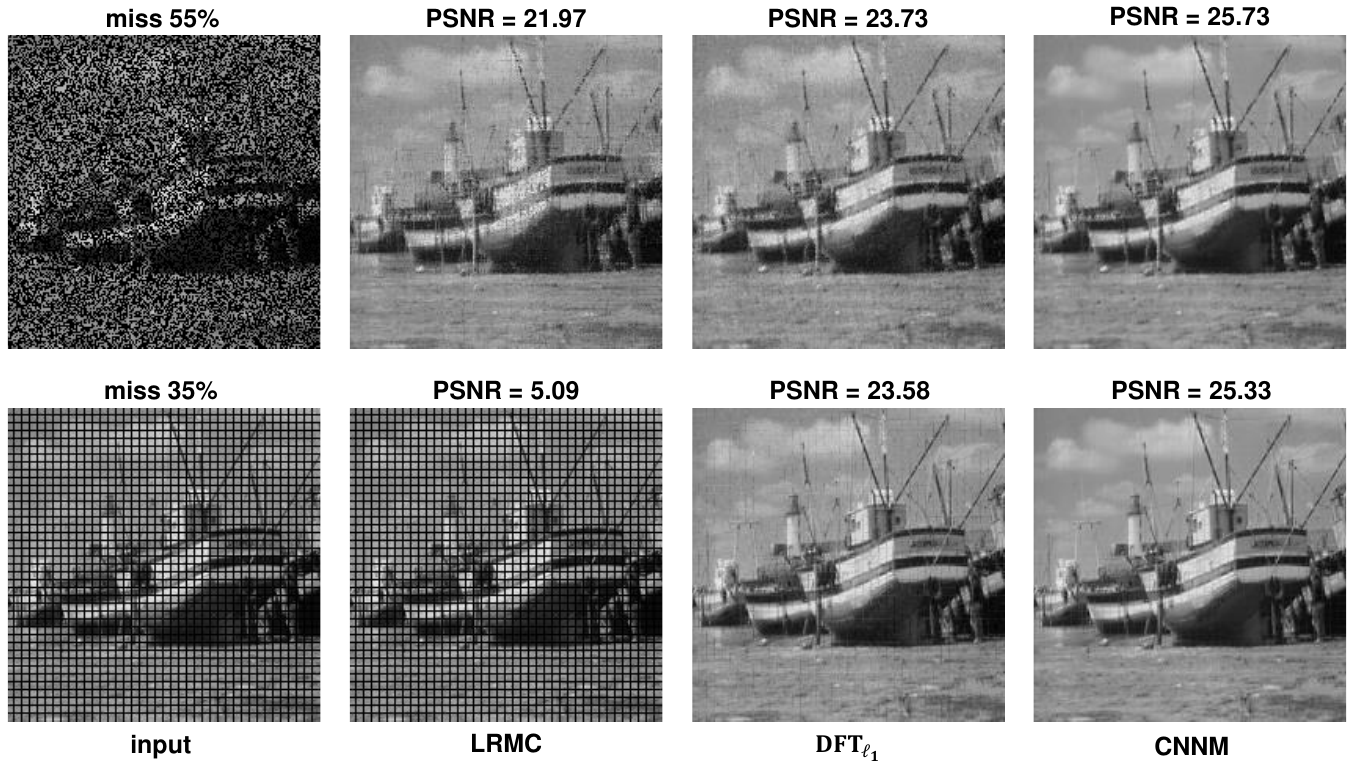}\vspace{-0.15in}
\caption{Examples of image completion. From left to right: the input data, the results by LRMC, the results by $\mathrm{DFT}_{\ell_1}$, and the results by CNNM. The Gini index of the Fourier transform of the original Boats image is 0.63. Again, note that the PSNR values reported in this paper are computed only on the missing entries.}\label{fig:lena:example}
\end{center}
\end{figure*}
The proposed CNNM is indeed a general method for completing partial tensors rather than specific forecasting models. To validate their completion performance, we consider the task of restoring the $200\times200$ Boats image from its incomplete versions. We also include the LRMC method established by~\cite{Candes:2009:math} into comparison.

Figure~\ref{fig:lena:acc} evaluates the recovery performance of various methods. It is clear that LRMC is distinctly outperformed by $\mathrm{DFT}_{\ell_1}$, which is further outperformed largely by CNNM. Figure~\ref{fig:lena:example} shows that the images restored by CNNM is visually better than $\mathrm{DFT}_{\ell_1}$, whose results contain many artifacts but are still better than LRMC. In particular, the second row of Figure~\ref{fig:lena:example} illustrates that CNNM can well handle the ``unidentifiable'' cases where some rows and columns of an image are wholly missing. Table~\ref{tb:lena} shows some detailed evaluation results. As we can see, CNNM works almost equally well under a wide range of kernel sizes. However, since the computational complexity of CNNM is $\mathcal{O}(mk^2)$, the running time grows fast as the enlargement of the kernel size. So, when the computational efficiency is a high priority, it is desirable to use smaller kernels, e.g., $k_1=k_2=13$.
\begin{figure}[h!]
\begin{center}
\includegraphics[width=0.48\textwidth]{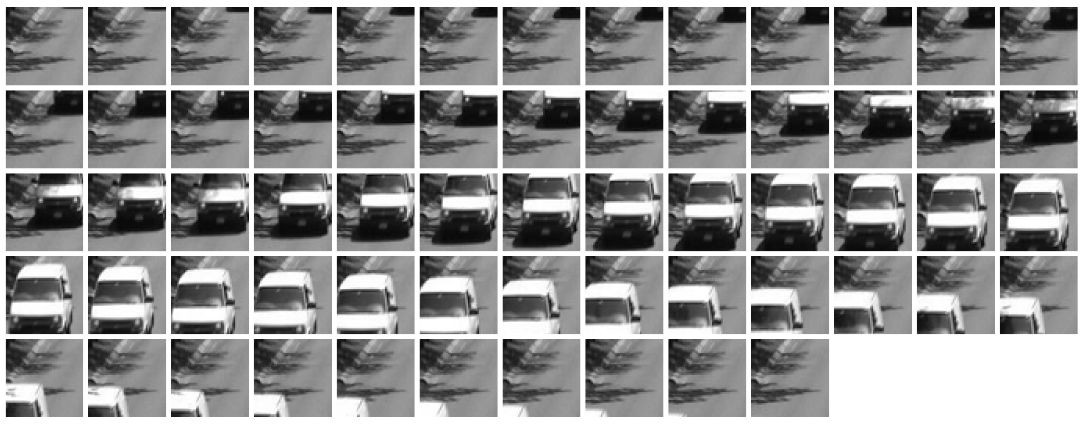}\vspace{-0.1in}
\caption{All 62 frames of the $50\times50\times62$ video used for experiments. This sequence records the entire process that a bus passes through a certain location on a highway. The Gini index of the Fourier transform of this dataset is 0.73.}\label{fig:highway:all}
\end{center}
\end{figure}
\begin{figure}[h!]
\begin{center}
\includegraphics[width=0.4\textwidth]{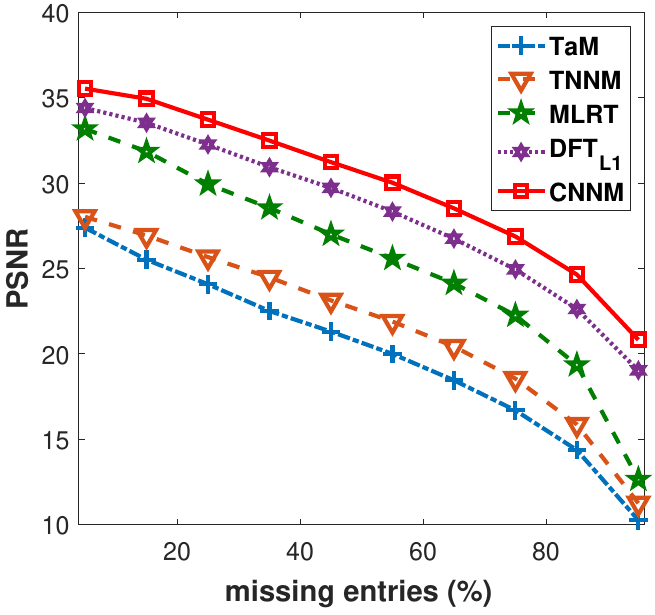}\vspace{-0.15in}
\caption{Evaluation results of restoring the $50\times50\times62$ video from randomly selected entries. The numbers plotted above are averaged from 20 random trials.}\label{fig:highway:acc}
\end{center}
\end{figure}
\subsection{Video Completion and Prediction}
We create a $50\times50\times62$ video consisting of a sequence of $50\times50$ images patches quoted from the CDnet 2014 database~\citep{cdcnet2014}, as demonstrated in Figure~\ref{fig:highway:all}. We first consider a completion task of restoring the video from some randomly chosen entries. To show the advantages of the proposed methods, we include for comparison three tensor completion methods, including Tensor as a Matrix (TaM)~\citep{Candes:2009:math}, Tensor Nuclear Norm Minimization (TNNM)~\citep{tc:tpami:2013:ye} and Mixture of Low-Rank Tensors (MLRT)~\citep{tc:arxiv:2010:ryota}. As shown in Figure~\ref{fig:highway:acc}, $\mathrm{DFT}_{\ell_1}$ dramatically outperforms all competing methods that utilize (Tucker) low-rankness to approximate the structure of videos. But $\mathrm{DFT}_{\ell_1}$ is further outperformed distinctly by CNNM ($13\times13\times13$); this confirms the benefits of controlling the kernel size.
\begin{table}[h!]
\caption{Evaluation results (PSNR) of video prediction. The task is to forecast the last 6 frames given the former 56 frames. For CNNM, the first two quantities of the kernel size are fixed as $k_1=k_2 = 13$.}\label{tb:highway}\vspace{-0.15in}
\begin{center}
\begin{tabular}{ccccccc}\hline
    Methods                      &1st           &2st &3st &4st &5st &6st\\\hline
    TaM                     &5.22	&5.49	&5.75	&5.96	&6.15	&6.23\\
    TNNM                    &5.22	&5.49	&5.75	&5.96	&6.15	&6.23\\
    MLRT                    &5.22	&5.49	&5.75	&5.96	&6.15	&6.23\\\hline
    LSTM                    &20.17  &18.80  &17.01  &16.49  &14.73  &11.83\\
    $\mathrm{DFT}_{\ell_1}$  &22.56	&19.53	&18.72	&19.43	&21.21	&26.20\\
    CNNM($k_3=13$)&23.48&20.64	&20.02	&20.80	&23.07	&28.18\\
    CNNM($k_3=31$)&24.27&21.65	&21.17	&22.10	&24.49	&29.72\\
    CNNM($k_3=62$)&\textbf{24.57}&\textbf{22.01}	&\textbf{21.56}	&\textbf{22.56}	&\textbf{24.98}	&\textbf{30.32}\\\hline
\end{tabular}
\end{center}
\end{table}

We now consider a forecasting task of recovering the last 6 frames from the former 56 frames. Similar to the experimental setup of Section~\ref{sec:exp:uni}, the LSTM network used here also contains 4 layers. The number of input units and LSTM hidden units are 2500 and 500, respectively. Table~\ref{tb:highway} compares the performance of various methods, in terms of PSNR. It can be seen that TaM, TNNM and MLRT produce very poor forecasts. In fact, all the methods built upon Tucker low-rankness may use zero to predict the unseen future entries. As shown in Figure~\ref{fig:highway:res}, $\mathrm{DFT}_{\ell_1}$ owns certain ability to forecast the future data, but the obtained images are full of artifacts. By contrast, the quality of the images predicted by CNNM is much higher.
\begin{figure}[h!]
\begin{center}
\includegraphics[width=0.48\textwidth]{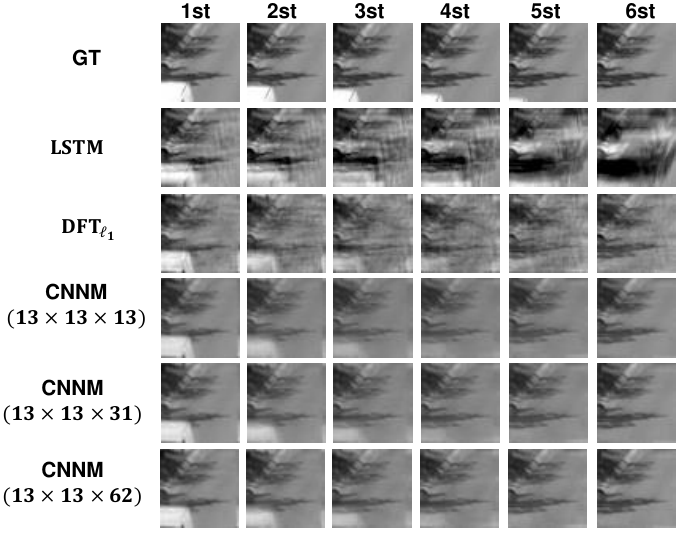}\vspace{-0.15in}
\caption{Visual results of video prediction. The goal of the task is to forecast the last 6 frames given the former 56 frames. GT standards for ``ground truth''.}\label{fig:highway:res}
\end{center}
\end{figure}
\section{Conclusion and Future Work}\label{sec:con}
In this work, we studied the problem of predicting the future values of a time series based on some previously observed data, i.e., the well-known \emph{time series forecasting} (TSF). We first reformulated the TSF problem to an inclusive compressed sensing problem termed \emph{tensor completion with arbitrary sampling} (TCAS), which targets at restoring a tensor from a subset of its entries sampled in an arbitrary fashion. We then showed that the TCAS problem, under certain situations, can be solved by a novel method termed CNNM. Namely, we proved that, whenever the sampling complexity exceeds certain threshold that depends on the convolution rank of the target tensor, CNNM ensures the success of recovery absolutely---or at least to some extent. Experiments on some realistic datasets demonstrated that CNNM is a promising method for data completion and forecasting.

While effective, as one may have noticed, CNNM contributes only to a certain case of the TCAS problem; namely, the data tensor should own particular structures such that its convolution matrix is low-rank or approximately so. There are many kinds of data that may violate such a convolutional low-rankness condition, for which the idea of \emph{representation learning} is quite helpful, as shown in~\cite{lbcnnm:2021}.
\section*{Acknowledgement}
This work is supported in part by New Generation AI Major Project of Ministry of Science and Technology of China under Grant 2018AAA0102501, in part by national Natural Science Foundation of China (NSFC) under Grant U21B2027.
\bibliographystyle{IEEEtran}
\bibliography{ref}
\begin{IEEEbiography}{\textbf{Guangcan Liu}} (Senior Member, IEEE) received the bachelor's degree in mathematics and the Ph.D. degree in computer science and engineering from Shanghai Jiao Tong University, Shanghai, China, in 2004 and 2010, respectively. He was a Post-Doctoral Researcher with the National University of Singapore, Singapore, from 2011 to 2012; the University of Illinois at Urbana¨CChampaign, Champaign, IL, USA, from 2012 to 2013; Cornell University, Ithaca, NY, USA, from 2013 to 2014; and Rutgers University, Piscataway, NJ, USA, in 2014. He was a Professor with the School of Automation, Nanjing University of Information Science and Technology, Nanjing, from 2014 to 2021. He is currently a Professor with the School of Automation, Southeast University, Nanjing, China. His research interests include the areas of machine learning, computer vision, and signal processing.
\end{IEEEbiography}

\begin{IEEEbiography}{\textbf{Wayne Zhang}} received the B.Eng. degree in electronic engineering from the Tsinghua University, Beijing, China, in 2007, the M.Phil. degree in 2009, and Ph.D. degree in 2012, both in information engineering from The Chinese University of Hong Kong. He is currently a Senior Research Director in SenseTime Group Limited. He serves as an EXCO member of AI Specialist Group of Hong Kong Computer Society. His research interests include deep learning and computer vision.
\end{IEEEbiography}

\end{document}